\documentclass{article}

\PassOptionsToPackage{numbers,sort&compress}{natbib}

\usepackage[main, preprint]{neurips_2026}
\usepackage{algorithm}
\usepackage{algpseudocode}
\usepackage{amsmath}
\usepackage{amssymb}   
\usepackage{amsthm}     
\usepackage[utf8]{inputenc}
\usepackage[T1]{fontenc}
\usepackage{hyperref}
\usepackage{url}
\usepackage{booktabs}
\usepackage{amsfonts}
\usepackage{nicefrac}
\usepackage{microtype}
\usepackage{xcolor}
\usepackage{graphicx} 
\usepackage{subcaption}
\usepackage{caption}
\usepackage[most]{tcolorbox}
\usepackage{float}       
\usepackage{booktabs}     
\usepackage{tabularx}    
\usepackage{algorithm}
\usepackage{algpseudocode}
\usepackage[most]{tcolorbox}
\usepackage{placeins}     
\usepackage{multirow}
\usepackage{makecell}
\usepackage{wrapfig}
\usepackage{xurl}
\title{Your Language Model is Its Own Critic: Reinforcement Learning with Value Estimation \\
from Actor’s Internal States}

\author{%
  Yunho Choi\textsuperscript{$*$} \\
  Graduate School of Data Science \\
  Seoul National University \\
  \texttt{dbsgh7177@snu.ac.kr}
  \And
  Jongwon Lim\textsuperscript{$*$} \\
  Graduate School of Data Science \\
  Seoul National University \\
  \texttt{elijah0430@snu.ac.kr}
  \AND
  Woojin Ahn \\
  Computer Science and Engineering  \\
  Seoul National University \\
  \texttt{awj1204@snu.ac.kr}
  \And
  Minjae Oh \\
  Graduate School of Data Science \\
  Seoul National University \\
  \texttt{kosair@snu.ac.kr}
  \AND
  Jeonghoon Shim \\
  Graduate School of Data Science \\
  Seoul National University \\
  \texttt{jhshim98@snu.ac.kr}
  \And
  Yohan Jo\textsuperscript{$\dagger$} \\
  Graduate School of Data Science \\
  Seoul National University \\
  \texttt{yohan.jo@snu.ac.kr}
}
\begin{document}

\newtheorem{proposition}{Proposition}
\newtheorem{corollary}{Corollary}

\maketitle

\begin{abstract}
Reinforcement learning with verifiable rewards (RLVR) for Large Reasoning Models hinges on baseline estimation for variance reduction, but existing approaches pay a heavy price: PPO requires a policy-model scale critic, while GRPO needs multiple rollouts per prompt to keep its empirical group mean stable.
We introduce \textbf{POISE} (\underline{P}olicy \underline{O}ptimization with \underline{I}nternal \underline{S}tate Value \underline{E}stimation), which obtains a baseline at negligible cost by using the policy model's internal signals already computed
during the policy forward pass.
A lightweight probe predicts the expected verifiable reward from the hidden states of the prompt and generated trajectory, as well as token-entropy statistics, and is trained online alongside the policy.
To preserve gradient unbiasedness despite using trajectory-conditioned features, we introduce a cross-rollout construction that predicts each rollout's value from an independent rollout's internal states.
Because POISE estimates prompt value using only a single rollout, it enables higher prompt diversity for a fixed compute budget during training. This reduces gradient variance for more stable learning and also eliminates the compute overhead of sampling costs for detecting zero-advantage prompts.
On Qwen3-4B and DeepSeek-R1-Distill-Qwen-1.5B across math reasoning benchmarks, POISE matches DAPO while requiring less compute. Moreover, its value estimator shows similar performance to a separate LLM-scale value model and generalizes to various verifiable tasks. 
By leveraging the model's own internal representations, POISE enables more stable and efficient policy optimization. \footnote{We will release the code upon the publication of the paper.}
\end{abstract}

\def\thefootnote{\fnsymbol{footnote}}
\footnotetext[1]{Equal contribution.}
\footnotetext[2]{Corresponding author.}
\def\thefootnote{\arabic{footnote}}

\section{Introduction}
Large language models (LLMs) have recently shown remarkable improvements on complex reasoning tasks by generating long chains of thought before committing to a final answer \citep{jaech2024openai, yeo2025demystifyinglongchainofthoughtreasoning}. 
A central driver of this progress has been reinforcement learning with verifiable rewards (RLVR), which optimizes the model using outcome-level 
rewards \citep{deepseek_r1, lambert2025tulu3pushingfrontiers}. 
To reduce reward variance and the resulting training instability, a baseline is subtracted from the reward to form an advantage---a measure of how much better a given response is relative to what the model would typically achieve. Obtaining a reliable baseline is therefore central to stable and efficient RLVR.

Yet existing approaches pay a significant computational price to do so. Proximal Policy Optimization (PPO)~\citep{schulman2017proximal} trains an LLM-scale critic with the policy to produce per-token baseline values; critic must process the full generated sequence at every update, roughly doubling memory consumption and increasing the optimization complexity. Group Relative Policy Optimization (GRPO)~\citep{shao2024deepseekmath} sidesteps the critic by estimating a per-prompt baseline as the mean reward over a group of sampled rollouts, but this trades parameters for samples: 
a reliable estimation of the baseline 
requires multiple rollouts per prompt, which under a fixed compute budget reduces in-batch prompt diversity and, in turn, inflates the variance of gradient estimates~\citep{gao2025promptcurriculumlearningefficient} (see \S~\ref{sec:gradientvariance}). Substantial compute is also spent on uninformative prompts, for which all rollouts receive identical rewards and therefore yield zero advantage~\citep{yu2025dapoopensourcellmreinforcement}. As reasoning trajectories grow longer, both costs compound and consume compute that could otherwise 
be used for learning.
Underlying both approaches is the same bottleneck: producing a baseline demands extra resources. This motivates the central question of our work: \textit{Can an effective baseline be extracted from the computations already performed during policy training?}

We suggest that a promising answer to this question is to leverage the information encoded in the policy model’s own internal representations to estimate the baseline. This hypothesis is grounded in a growing body of work showing that hidden states of LLMs and LRMs encode outcome-relevant information such as perceived difficulty, capability boundaries, and answer correctness, which can serve as a highly informative proxy for expected rewards. Yet these signals have been treated purely as diagnostic tools at inference time, leaving their potential to inform training entirely unexplored.

In this paper, we propose \textbf{POISE} (\underline{P}olicy \underline{O}ptimization with \underline{I}nternal \underline{S}tate Value \underline{E}stimation), a reinforcement learning algorithm that 
turns the model's internal states into a value model.
Concretely, we train a lightweight probe that predicts the value 
$V^{\pi}(x)=\mathbb{E}_{y\sim\pi(\cdot\mid x)}[R(x,y)]$ from internal signals collected at two levels. 
The first is a \emph{prompt-level} feature, extracted from hidden states at the final prompt tokens before generation begins, which captures how the model represents the prompt and its anticipated difficulty under the current policy. The second is a \emph{trajectory-level} feature, comprising hidden states taken when the model's reasoning ends together with token-level entropy. 
Because using rollout-dependent signals in the baseline biases the gradient estimator, we pair each rollouts with a second, independent rollout from the same prompt. The probe predicts the \emph{paired} rollout's value thereby making the value independent to the corresponding rollout.
This cross-rollout architecture keeps the baseline conditionally independent of the action which otherwise introduce bias into the gradient estimator~\citep{williams1992simple, tucker2018mirageactiondependentbaselinesreinforcement}, so the probe is driven to recover the policy's expected reward $V^{\pi}(x)$ rather than to memorize trajectory-specific outcomes. Trained jointly with the policy on a sliding buffer of recent rollouts, our value estimator tracks the evolving policy with negligible overhead.

Our method offers several concrete advantages over existing approaches. Unlike PPO, the baseline is supplied by a lightweight value estimator rather than an LLM-scale critic. Compared to GRPO, our method requires only a pair of rollouts rather than a large group; the saved budget can be redirected to more distinct prompts per batch, improving training stability. Moreover, because the value estimator provides a lightweight continuous baseline for each rollout, POISE avoids the extra sampling needed to identify and discard degenerate zero-advantage prompt groups.

We validate these claims experimentally. POISE matches DAPO~\citep{yu2025dapoopensourcellmreinforcement}: a state of the art, GRPO-based RL algorithm in LLM reasoning, with less compute. We also show that our lightweight value estimator performs similar to an LLM-scale value model in performance (Figure~\ref{fig:value_estimator_prelim}), despite relying only on signals already produced during the policy’s forward pass. Beyond these performance results, we analyze the estimator itself (\S~\ref{sec:estimator-analysis}), identifying which layers and signals contribute most to value prediction and tracking how the estimator evolves alongside the policy during training. Finally, we demonstrate that the estimator generalizes beyond mathematical reasoning, yielding consistent gains on coding, tool-calling, and instruction-following tasks.

Overall, we show that internal representations of reasoning models can move beyond their conventional use as diagnostic tools for reasoning behavior and serve as practical optimization signals for reinforcement learning.
Without group-relative baselines or a separate critic model, our method provides a compute-efficient path toward stable and scalable RLVR for large reasoning models.

\section{Preliminaries}
\label{sec:preliminaries}

\subsection{Policy Gradient and Baseline Estimation}
We formulate RLVR for LLM reasoning as a contextual bandit problem over prompt--response pairs~\citep{wang2026spposequencelevelppolonghorizon}. Given a prompt $x \sim \mathcal{D}$ and a response $y \sim \pi_\theta(\cdot \mid x)$ sampled from the policy model, the objective is to maximize the expected verifiable reward $R(x, y)$,
\begin{equation}
    J(\theta)
    =
    \mathbb{E}_{x \sim \mathcal{D},\, y \sim \pi_\theta(\cdot \mid x)}
    \left[
    R(x,y)
    \right].
\end{equation}
By the policy gradient theorem, 
\begin{equation}
    \nabla_\theta J(\theta)
    =
    \mathbb{E}_{x \sim \mathcal{D},\, y \sim \pi_\theta(\cdot \mid x)}
    \left[
    R(x,y)\, \nabla_\theta \log \pi_\theta(y \mid x)
    \right],
\end{equation}
which yields the REINFORCE estimator~\citep{williams1992simple}. In practice, this estimator is typically combined with a baseline $b(x)$ to reduce variance~\citep{sutton2000policy}, giving the advantage $A(x,y) = R(x,y) - b(x)$ and the gradient estimator
\begin{equation}
    \nabla_\theta J(\theta)
    =
    \mathbb{E}   _{x \sim \mathcal{D},\, y \sim \pi_\theta(\cdot \mid x)}
    \left[
    \left(R(x,y) - b(x)\right)
    \nabla_\theta \log \pi_\theta(y \mid x)
    \right].
    \label{eq:baseline_pg}
\end{equation}
The standard near-optimal choice for variance reduction is the value function~\citep{weaver2001optimal,greensmith2004variance}
\begin{equation}
    V^{\pi_\theta}(x)
    =
    \mathbb{E}_{y \sim \pi_\theta(\cdot \mid x)}
    \left[
    R(x,y)
    \right],
\end{equation}
which is unknown and must be estimated in practice. PPO approximates $V^{\pi_\theta}(x)$ with a learned critic $v_\phi$ that is trained jointly with the policy, providing a direct parametric estimate of the value function. GRPO instead samples a group of $G$ responses $\{y^{(1)},\ldots,y^{(G)}\}$ for the same prompt and uses their mean reward as an empirical prompt-level baseline,
\begin{equation}
    b_{\mathrm{GRPO}}(x)
    =
    \frac{1}{G}
    \sum_{j=1}^{G} R(x,y^{(j)}),
\end{equation}
obtaining the baseline directly from on-policy rollouts.

\subsection{Unbiasedness Condition for Baselines}
Subtracting a baseline preserves the unbiasedness of the policy gradient only when the baseline term has zero expectation:
\begin{equation}
    \mathbb{E}_{y \sim \pi_\theta(\cdot \mid x)}
    \left[
    b(x)\, \nabla_\theta \log \pi_\theta(y \mid x)
    \right]
    = 0.
    \label{eq:unbiased_baseline}
\end{equation}
This condition holds when the baseline is conditionally independent of the sampled response $y$ given the prompt $x$, in which case
\begin{equation}
    \mathbb{E}_{y \sim \pi_\theta(\cdot \mid x)}
    \left[
    b(x)\, \nabla_\theta \log \pi_\theta(y \mid x)
    \right]
    =
    b(x)\, \nabla_\theta
    \sum_y \pi_\theta(y \mid x)
    = 0.
\end{equation}
Equivalently, a baseline may depend only on the prompt or on any quantity that is independent of the sampled response given the prompt. Violating this condition biases the gradient and can drive the policy to converge suboptimally. We therefore adopt a \emph{cross-rollout} construction, where the baseline for a response is computed from another independent response, preserving Eq.~\eqref{eq:unbiased_baseline}.

\subsection{Gradient Variance and Number of Prompts in the Batch}
\label{sec:gradientvariance}

A baseline estimator that requires fewer rollouts per prompt can reallocate the same completion budget toward more distinct prompts in each batch. This section formalizes why such prompt diversity matters for policy optimization. We show that, under a fixed compute budget, allocating rollouts across more distinct prompts reduces the noise of the gradient estimate.

Let $B$ be the total number of completions in a training batch, with $n$ distinct prompts and $m$ completions each, so $B = n\cdot m$. For prompt $x^{(i)} \sim D$ and completion $y^{(ij)} \sim \pi_\theta(\cdot \mid x^{(i)})$, define the per-sample gradient:
\begin{equation}
    Z(x,y) = \nabla_\theta \log \pi_\theta(y \mid x)\bigl(R(x,y) - b(x)\bigr)
\end{equation}
where $b(x)$ is a baseline. The batch gradient estimator is:
\begin{equation}
   \hat{g} = \frac{1}{n}\sum_{i=1}^n \frac{1}{m}\sum_{j=1}^m Z(x^{(i)}, y^{(ij)}).
   \label{eq:gradient_estimator}
\end{equation}
\begin{proposition}[Gradient variance decomposition]
    Let $\Sigma_w$ and $\Sigma_b$ denote the within-prompt and between-prompt covariance matrices of $Z$. Both $\Sigma_w$ and $\Sigma_b$ are fixed properties of $(D, \pi_\theta, R)$, independent of the allocation $(n, m)$. Then:
    \begin{equation}
        \operatorname{Cov}(\hat{g}) = \frac{1}{B}\Sigma_w + \frac{m}{B}\Sigma_b.
    \end{equation}
\end{proposition}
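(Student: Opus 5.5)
The plan is to apply the law of total covariance, conditioning on the sampled prompts. First I fix the definitions that the statement leaves implicit. Let $\mu(x) = \mathbb{E}_{y\sim\pi_\theta(\cdot\mid x)}[Z(x,y)]$ be the conditional mean gradient. Set $\Sigma_w = \mathbb{E}_{x\sim D}\bigl[\operatorname{Cov}_{y\sim\pi_\theta(\cdot\mid x)}(Z(x,y))\bigr]$, the expected within-prompt covariance, and $\Sigma_b = \operatorname{Cov}_{x\sim D}(\mu(x))$, the covariance of the conditional means. Both depend only on $(D,\pi_\theta,R)$ and on the fixed baseline $b$, not on $(n,m)$, which gives the independence claim at once. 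I also make the sampling model explicit: the prompts $x^{(1)},\dots,x^{(n)}$ are i.i.d.\ from $D$, and given $x^{(i)}$ the completions $y^{(i1)},\dots,y^{(im)}$ are i.i.d.\ from $\pi_\theta(\cdot\mid x^{(i)})$. I treat $b(x)$ as a deterministic function of the prompt. For the cross-rollout baseline this is justified by the conditional independence discussed around Eq.~\eqref{eq:unbiased_baseline}, and otherwise one can absorb the baseline randomness into the within-prompt term.

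Second, I reduce to a single prompt. Write $\hat g = \frac1n\sum_i \bar Z_i$ with $\bar Z_i = \frac1m\sum_j Z(x^{(i)},y^{(ij)})$. The pairs $(x^{(i)},\{y^{(ij)}\}_j)$ are i.i.d.\ across $i$, so the cross terms vanish and $\operatorname{Cov}(\hat g)=\frac1n\operatorname{Cov}(\bar Z_1)$.

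Third, I decompose $\operatorname{Cov}(\bar Z_1)$ by the law of total covariance:
\[
\operatorname{Cov}(\bar Z_1)=\mathbb{E}\bigl[\operatorname{Cov}(\bar Z_1\mid x^{(1)})\bigr]+\operatorname{Cov}\bigl(\mathbb{E}[\bar Z_1\mid x^{(1)}]\bigr).
\]
The completions are conditionally i.i.d.\ given the prompt, so $\operatorname{Cov}(\bar Z_1\mid x)=\frac1m\operatorname{Cov}(Z\mid x)$ and $\mathbb{E}[\bar Z_1\mid x]=\mu(x)$. This gives $\operatorname{Cov}(\bar Z_1)=\frac1m\Sigma_w+\Sigma_b$. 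Multiplying by $\frac1n$ and using $nm=B$ yields $\frac1{nm}\Sigma_w+\frac1n\Sigma_b=\frac1B\Sigma_w+\frac mB\Sigma_b$.

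The calculation itself is routine. The main obstacle is the modeling step: making precise which randomness the baseline carries, so that $\Sigma_w$ and $\Sigma_b$ are well defined and independent of $(n,m)$. This fails for GRPO's group-mean baseline, which depends on $m$ and on the sibling rollouts. In that case I would state the proposition for a prompt-measurable baseline (or the cross-rollout one) and remark that the group-mean case changes $Z$ with $m$, so it needs separate treatment.
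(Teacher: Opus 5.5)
Your proof is correct and follows the paper's argument step for step. It uses the same definitions of $\mu(x)$, $\Sigma_w$ and $\Sigma_b$, applies the law of total covariance to the per-prompt average to get $\frac1m\Sigma_w+\Sigma_b$, takes the factor $\frac1n$ from independence across prompts, and substitutes $n=B/m$. Like the paper, it needs the $m$ summands to be conditionally i.i.d.\ given the prompt, which holds for a prompt-measurable $b(x)$.

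You should, however, drop the claim that the cross-rollout baseline is covered. Conditional independence of $b^{(i)}$ from $y^{(i)}$ gives unbiasedness. But $b^{(1)}=g_f(\phi^{(2)})$ is still random, and each summand depends on both rollouts.

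To see the effect, write $s(y)=\nabla_\theta\log\pi_\theta(y\mid x)$ and $\beta(y)=g_f(\phi(x,y))$, with $g_f$ fixed within the step. Set $Z^{(1)}=s(y^{(1)})\bigl(R(x,y^{(1)})-\beta(y^{(2)})\bigr)$, and define $Z^{(2)}$ symmetrically. Expanding the product and using $\mathbb{E}[s(y)]=0$ gives $\operatorname{Cov}(Z^{(1)},Z^{(2)}\mid x)=\nu(x)\nu(x)^\top$, where $\nu(x)=\mathbb{E}_y[s(y)\beta(y)]$. This is nonzero in general. So at $m=2$ you get $\operatorname{Cov}(\hat g)=\frac1B\Sigma_w+\frac1B\mathbb{E}_x[\nu(x)\nu(x)^\top]+\frac2B\Sigma_b$, where $\Sigma_w$ now also contains the baseline noise. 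That is an extra term beyond the stated formula.

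For the same reason, ``absorbing the baseline randomness into the within-prompt term'' works only when that randomness is independent across the $m$ completions. Randomness shared among sibling rollouts, as in the cross-rollout or group-mean baselines, breaks the $\frac1m$ scaling.
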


\begin{proof}
See \S~\ref{proof:prop1}.
\end{proof}
\begin{corollary}[Optimal allocation]
    For a fixed budget $B$ and baseline $b(x)$, the variance of $\hat{g}$ is monotonically non-decreasing in $m$ (in the Loewner order) and is minimized at $m=1$, $n=B$.
\end{corollary}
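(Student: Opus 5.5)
The plan is to take the decomposition in Proposition~1 as given and read off the dependence on $m$ with $B$ held fixed. Substituting $n = B/m$, it gives
\begin{equation*}
    \operatorname{Cov}(\hat{g}) = \frac{1}{B}\Sigma_w + \frac{m}{B}\Sigma_b ,
\end{equation*}
where, by the hypothesis of Proposition~1, $\Sigma_w$ and $\Sigma_b$ depend only on $(D,\pi_\theta,R)$ and the fixed baseline $b(x)$, not on the allocation. So the only term that changes with $m$ is $\frac{m}{B}\Sigma_b$. The first term is constant across all admissible allocations.

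The key step is that $\Sigma_b$ is positive semidefinite. I would argue this directly from its definition as the between-prompt covariance, $\Sigma_b = \operatorname{Cov}_{x\sim D}\bigl(\mu(x)\bigr)$ with $\mu(x) = \mathbb{E}_{y\sim\pi_\theta(\cdot\mid x)}[Z(x,y)]$. Any covariance matrix satisfies $v^\top \Sigma_b v = \operatorname{Var}(v^\top \mu(x)) \ge 0$ for every vector $v$.

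Monotonicity then follows. Take admissible $m_1 \le m_2$, meaning divisors of $B$ so that $n$ is an integer. Then
\begin{equation*}
    \operatorname{Cov}(\hat{g})\big|_{m_2} - \operatorname{Cov}(\hat{g})\big|_{m_1} = \frac{m_2-m_1}{B}\,\Sigma_b \succeq 0 .
\end{equation*}
This is exactly non-decreasing in the Loewner order. Since $m \ge 1$ and $m=1$ always divides $B$, the minimum in Loewner order is attained at $m=1$, $n=B$. This is a genuine minimum in the Loewner partial order, because every other admissible allocation dominates it, not merely an incomparable alternative.

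The only place needing care, and the main obstacle, is justifying that $\Sigma_b$, and hence the whole comparison, is genuinely allocation-independent. Here I would simply invoke the statement of Proposition~1, noting that the baseline is held fixed as a function $b(x)$ per the corollary's hypothesis. For a GRPO-style group mean this would fail, since that baseline changes with $m$, which is why the corollary fixes $b(x)$. I would also remark that monotonicity is strict whenever $\Sigma_b \neq 0$, that is, whenever the expected per-prompt gradients vary across prompts.
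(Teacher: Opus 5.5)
Your proposal is correct and follows the paper's proof essentially step for step. You difference the covariances of two allocations with the same $B$ to get $\frac{m_2-m_1}{B}\Sigma_b \succeq 0$, use that $\Sigma_b$ is a covariance matrix, and conclude Loewner monotonicity with the minimum at $m=1$, $n=B$. Your extra remarks (restricting $m$ to divisors of $B$, strictness when $\Sigma_b \neq 0$, and why the baseline must be a fixed function $b(x)$ rather than a group mean) are sound points that the paper leaves implicit.
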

\begin{proof}
  See \S~\ref{proof:cor1}.
\end{proof}

In other words, given the same total budget, using as many diverse prompts as possible is critical to stable learning (i.e.,\ $m=1$ or $2$). Yet GRPO requires repeated sampling from the same prompt to estimate a faithful baseline $b(x)$, This motivates our method of estimating a reliable baseline with minimal sampling without training a separate value network as in PPO. 

\begin{figure}
    \centerline{\includegraphics[width=0.75\columnwidth]{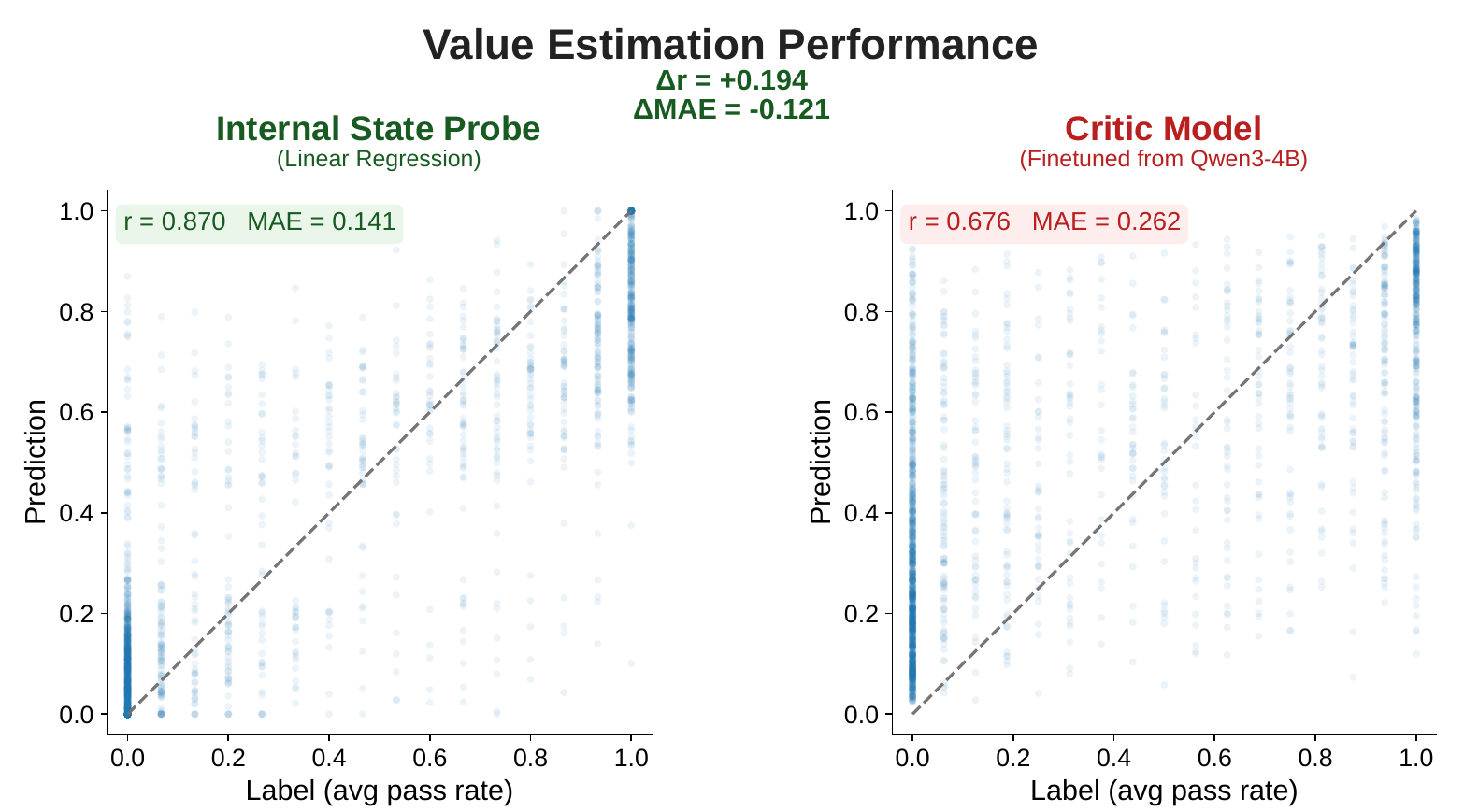}}
\caption{
Comparing value prediction between our internal state probe and a separately trained critic model. Predictions are compared against empirical Avg@8 scores. Our probe achieves higher Pearson correlation ($r$) and lower MAE, indicating that the policy's own internal representations provide an effective low-cost signal for value estimation.
}  \label{fig:value_estimator_prelim} 
\end{figure}

\section{Policy Optimization with Internal State Value Estimation (POISE)}
\label{sec:POISE}
We now introduce POISE, which leverages the policy model's internal state signals for value estimation in RLVR. We first show that a lightweight probe can predict the value function, i.e., the expected verifier reward, directly from the policy model's internal states (\S~\ref{sec:Estimating_Baseline_Values}). We then integrate this probe into policy optimization to compute per-rollout advantages, yielding the full POISE algorithm without requiring a separate LLM-scale value model (\S.~\ref{sec:cross_rollout}).

\subsection{Value Function Estimation from Policy Model Internal States}
\label{sec:Estimating_Baseline_Values}

We introduce a probe designed to estimate baseline values directly from the policy model's internal representations. Since the viability of this method hinges on the presence of such information, we additionally present preliminary empirical results demonstrating that these internal states inherently encode the necessary signals for accurate value estimation.

\paragraph{Probe prediction objective.}
The probe is trained to predict the prompt-level value under the current policy, defined as the expected verifier reward:
\[
    V^{\pi_\theta}(x)
    =
    \mathbb{E}_{y\sim\pi_\theta(\cdot|x)}
    [R(x,y)] .
\]
Since the ground-truth quantity is unknown, we instead sample \(K\) rollouts for each prompt \(x\),
\(y^{(1)},\ldots,y^{(K)} \sim \pi_\theta(\cdot|x)\), and collect their verifier rewards
\(r^{(i)} = R(x,y^{(i)}) \in \{0,1\}\). For the supervised example associated with rollout
\(y^{(i)}\), we use the leave-one-out Monte Carlo target as its gold value:
\[
    \widehat V_{-i}(x)
    =
    \frac{1}{K-1}
    \sum_{j\neq i} r^{(j)} .
\]
By excluding \(r^{(i)}\), \(\widehat V_{-i}(x)\) remains conditionally
independent of the input rollout \(y^{(i)}\) given \(x\), while still estimating
\(V^{\pi_\theta}(x)\) in expectation. This prevents the target from leaking the
reward of the same rollout whose features are used by the probe.

\paragraph{Probe input features.}
As shown in Figure~\ref{fig:methodology_figure} (left), each supervised example
for our probe is indexed by a prompt and one rollout, \((x,y^{(i)})\). For each
pair, we construct the probe input from three complementary signals produced
during the forward pass of the current policy \(\pi_\theta\). (All hidden-state
features are extracted from a fixed layer \(\ell\), which we omit below for
readability.) Let \(H_{\theta,t}^{(i)}\) denote the residual-stream hidden state
at token position \(t\) for \((x,y^{(i)})\), and let \(P\) and \(R^{(i)}\)
denote the final \(n\) prompt-token and reasoning-token positions, respectively.

First, we use the prompt-state feature
\(h_{\theta,p}^{(i)}=\mathrm{Avg}_{t\in P}H_{\theta,t}^{(i)}\), motivated by
evidence that prompt hidden states encode pre-generation estimates of difficulty
and capability boundaries~\citep{zhu-etal-2025-llm}. Second, we use the
reasoning-state feature
\(h_{\theta,r}^{(i)}=\mathrm{Avg}_{t\in R^{(i)}}H_{\theta,t}^{(i)}\), since
trajectory-level hidden states can expose value-relevant information not
available from the prompt states alone~\citep{zhang2025reasoning}. Third, we use
token-level entropy statistics \(u_\theta^{(i)}\) as lightweight uncertainty
features~\citep{wang2026beyond}. The final probe input is
\[
    \phi_\theta^{(i)}
    =
    [h_{\theta,p}^{(i)};\, h_{\theta,r}^{(i)};\, u_\theta^{(i)}].
\]
We ablate these input components in \S~\ref{sec:estimator-analysis} and the
hidden-state extraction hyperparameters in
\S~\ref{app:ablations_hidden_state}.
It is important to clarify that, while the input features include the generated reasoning, the estimator learns to predict the prompt-based value, rather than verifying its own reasoning, because the prediction target during training is the expected reward derived from other responses.

\paragraph{Probe implementation.}

We train lightweight regressors \(g_f\) to minimize the following loss.
\[
    \mathcal{L}_{\mathrm{value}}(f)
    =
    \mathbb{E}_{x,i}
    \left[
    \left(
    g_f(\phi^{(i)}_\theta) - \widehat V_{-i}(x)
    \right)^2
    \right].
\]
Although our framework can theoretically support any regression architecture, we implement the probe using linear regression because its computational efficiency allows for fast, lightweight updates at each training step. We provide an ablation of probe designs in \S~\ref{app:ablations_probe_design}, and provide detailed implementations and hyperparameters in \S~\ref{app:hidden-state-value-estimator}.

\paragraph{Preliminary experiment.}

Before using the probe for policy optimization, we first test whether the policy model's internal states contain enough information to reliably estimate the prompt-level value. We construct a held-out value-prediction benchmark from
reward-labeled rollouts of the DAPO-Math \citep{yu2025dapoopensourcellmreinforcement} dataset and compare two estimators trained on the same data: (1) a separate policy-scale critic model as a strong baseline (see \S~\ref{app:critic} for details), and (2) our lightweight probe over the policy model's internal state and
entropy features. We evaluate both estimators on held-out prompts by comparing
their predictions against the empirical Avg@8 reward.

Figure~\ref{fig:value_estimator_prelim} shows that probes over the policy's internal
states achieve better held-out value prediction than the separate value model, despite
adding only a lightweight regression head. This shows that the policy model's own activations encode a compact signal about prompt difficulty and policy-specific uncertainty, which can be leveraged for value estimation at negligible cost.

\begin{figure}
    \centerline{
    \includegraphics[
        width=1\columnwidth
    ]{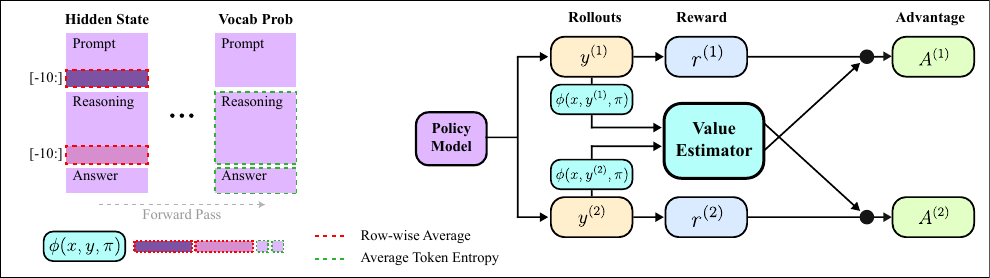}
    }
    \caption{Overview of POISE. \textbf{Left}: Probe features $\phi(x, y, \pi)$ combine hidden states with token entropy. \textbf{Right}: The value estimator predicts each rollout's baseline from the other rollout's features.}
    \label{fig:methodology_figure}
\end{figure}
\subsection{Policy Optimization with Cross-Rollout Baselines}
\label{sec:cross_rollout}

We now integrate the internal state probe into RL training as a value estimator, forming the full POISE algorithm (Figure~\ref{fig:methodology_figure}~right).

\paragraph{Two rollouts per prompt.}
For each prompt $x \sim \mathcal{D}$ in the training batch we sample two independent rollouts from the current policy,
\begin{equation}
    y^{(1)},\, y^{(2)} \;\overset{\mathrm{i.i.d.}}{\sim}\; \pi_{\theta_{\mathrm{old}}}(\cdot \mid x),
    \label{eq:two_rollouts}
\end{equation}
and evaluate their verifiable rewards $R(x, y^{(1)})$ and $R(x, y^{(2)})$.

\paragraph{Cross-rollout baseline and advantage.}
The baseline for each rollout is predicted from the internal signals of 
the \emph{other} rollout:
\begin{equation}
    b^{(1)}(x) \;=\; g_f\bigl(\phi^{(2)}),
    \qquad
    b^{(2)}(x) \;=\; g_f\bigl(\phi^{(1)}),
      \label{eq:cross_baseline}
\end{equation}
This yields the cross-rollout advantages
\begin{equation}
    A^{(i)}(x) \;=\; R(x, y^{(i)}) - b^{(i)}(x),
    \qquad i \in \{1, 2\}.
     \label{eq:cross_advantage}
\end{equation}
By construction, the baseline used to update $y^{(i)}$ depends only on the 
independently sampled rollout $y^{(j)}$, $j \neq i$, satisfying the 
conditional-independence condition in Eq.~\eqref{eq:unbiased_baseline}.

\paragraph{PPO-style policy update.}
We optimize the policy with a PPO-style clipped surrogate objective.
Let $r_t^{(i)}(\theta) = \pi_\theta(y_t^{(i)} \mid x, y_{<t}^{(i)}) / \pi_{\theta_{\mathrm{old}}}(y_t^{(i)} \mid x, y_{<t}^{(i)})$ be the importance ratio at token $t$ of rollout $i$. The objective is
\begin{equation}
\label{eq:ppo_loss}
\begin{aligned}
\mathcal{L}(\theta)
=
\mathbb{E}_{x \sim D,\; y^{(1)}, y^{(2)} \sim \pi_\theta(\cdot \mid x)}
\Bigg[
\frac{1}{2}
\sum_{i=1}^{2}
\frac{1}{|y^{(i)}|}
\sum_{t=1}^{|y^{(i)}|}
\min \Bigl\{
& r_t^{(i)}(\theta) A^{(i)}(x), \\
& \mathrm{clip}\!\left(
r_t^{(i)}(\theta),
1-\epsilon,
1+\epsilon
\right)
A^{(i)}(x)
\Bigr\}
\Bigg].
\end{aligned}
\end{equation}
which we maximize with respect to $\theta$ over multiple inner epochs per batch.

\paragraph{Online estimator training with a trajectory buffer.}
The value estimator $g_f$ is trained jointly with the policy on a sliding buffer
of recent trajectories. At each step, for each prompt $x$ with two independent
rollouts $(y^{(1)}, y^{(2)})$, we construct value-estimator examples
$\{(x, \phi(x,y^{(i)}), \widehat V_{-i}(x))\}_{i=1,2}$, where
$\widehat V_{-i}(x)=R(x,y^{(j)})$, $ j \neq i$.
We update $f$ by minimizing a regression loss over the union of these newly
generated examples and a buffer of examples from the most recent $n$ steps.
The buffer stabilizes the training signal under policy drift, while the joint
update keeps $g_f$ aligned with the value function of the evolving policy.
Because $g_f$ is a lightweight probe over signals already computed during the
forward pass, this update is negligible in cost. The full procedure is
summarized in Algorithm~\ref{alg:vepo}.

\section{Experiments}
\subsection{Experimental Setup}
\paragraph{Training. }
We instantiate our method on Qwen3-4B \citep{qwen3} and DeepSeek-R1-Distill-Qwen-1.5B \citep{deepseek_r1}, training on the English subset of DAPO-Math-17K~\citep{yu2025dapoopensourcellmreinforcement} with batch sizes of 1024 and 512 on B200 GPUs. Rollouts are sampled with temperature 1.0 and top-p 1.0.
Our main baseline is DAPO~\citep{yu2025dapoopensourcellmreinforcement}: a state of the art, GRPO-based RLVR algorithm for mathematical reasoning. We adopt the implementation of \citet{zheng2025actpaysefficientreinforcement}, which improves the efficiency of DAPO's dynamic sampling. Full hyperparameters are provided in \S~\ref{appendix:hyperparametersfortraining}.

\paragraph{Evaluation. } 
We evaluate our method on a suite of olympiad mathematical reasoning benchmarks: AMC23/24~\cite{maa_amc}, AIME24/25/26~\cite{maa_aime}, HMMT25~\cite{hmmt2025feb}, and BRUMO25~\cite{brumo2025}.
For each benchmark, we report Avg@32, using temperature 0.6 and top-p 0.95 following common reasoning-model evaluation settings~\citep{qwen3,deepseek_r1}. 
By averaging over 32 sampled responses per problem, this protocol provides a reliable estimate of each model's expected reasoning performance.
We also compare training efficiency by analyzing the wall-clock time each method requires to achieve comparable reasoning performance.
Detailed descriptions of each dataset, the full evaluation protocol are provided in \S~\ref{app:eval}.

\subsection{Main Results on Math Reasoning Benchmarks}
\label{sec:main_results}

Table~\ref{tab:main_results_combined} reports the main results on olympiad-level mathematical reasoning benchmarks.
For Qwen3-4B, POISE achieves an average Avg@32 score of 0.500, which is close to DAPO's 0.508, while outperforming DAPO on AMC23, HMMT25, and BRUMO25. For Deepseek-Distill-Qwen-1.5B, POISE improves the average Avg@32 score from 0.296 to 0.303 over DAPO, with gains on AIME24, AIME25, AIME26, HMMT25, and BRUMO25.
Across both model scales, these results indicate that POISE achieves performance comparable to a state-of-the-art RL algorithm while replacing group-relative baseline estimation with lightweight internal state value estimation.Detailed training dynamics are provided in ~\ref{app:training_dynamics}.
\begin{table}[t]
\centering
\caption{Performance comparison on olympiad level mathematical reasoning benchmarks. We report Avg@32 accuracy across various datasets. Our proposed internal state value estimation method (POISE) achieves competitive performance with the baseline models.}
\label{tab:main_results_combined}
\renewcommand{\arraystretch}{1.18}
\setlength{\tabcolsep}{5.5pt}
\resizebox{\textwidth}{!}{
\begin{tabular}{ll ccccccc | c}
\toprule
\multirow{2}{*}{\textbf{Model}} & \multirow{2}{*}{\textbf{Method}} & \multicolumn{7}{c|}{\textbf{Benchmark}} & \multirow{2}{*}{\textbf{Avg}} \\
\cmidrule(lr){3-9}
& & AMC23 & AMC24 & AIME24 & AIME25 & AIME26 & HMMT25 & BRUMO25 & \\
\midrule
\multirow{3}{*}{\textbf{Qwen3-4B}}
& \textit{base} & 0.422 & 0.319 & 0.263 & 0.196 & 0.244 & 0.129 & 0.217 & 0.258 \\
& DAPO & 0.876 & \textbf{0.607} & \textbf{0.490} & \textbf{0.475} & \textbf{0.457} & 0.267 & 0.384 & \textbf{0.508} \\
& POISE (Ours)& \textbf{0.891} & 0.592 & 0.469 & 0.437 & 0.443 & \textbf{0.280} & \textbf{0.387} & 0.500 \\
\midrule
\multirow{3}{*}{\makecell[l]{\textbf{Deepseek-Distill-}\\\textbf{Qwen-1.5B}}}
& \textit{base} & 0.169 & 0.078 & 0.067 & 0.067 & 0.104 & 0.021 & 0.042 & 0.078 \\
& DAPO & \textbf{0.697} & \textbf{0.447} & 0.254 & 0.219 & 0.198 & 0.065 & 0.191 & 0.296 \\
& POISE (Ours) & 0.694 & 0.446 & \textbf{0.270} & \textbf{0.234} & \textbf{0.213} & \textbf{0.066} & \textbf{0.198} & \textbf{0.303} \\
\bottomrule
\end{tabular}
}
\end{table}
\begin{figure}[t]
    \centering
    \begin{subfigure}[t]{0.48\textwidth}
        \centering
        \includegraphics[width=\textwidth]{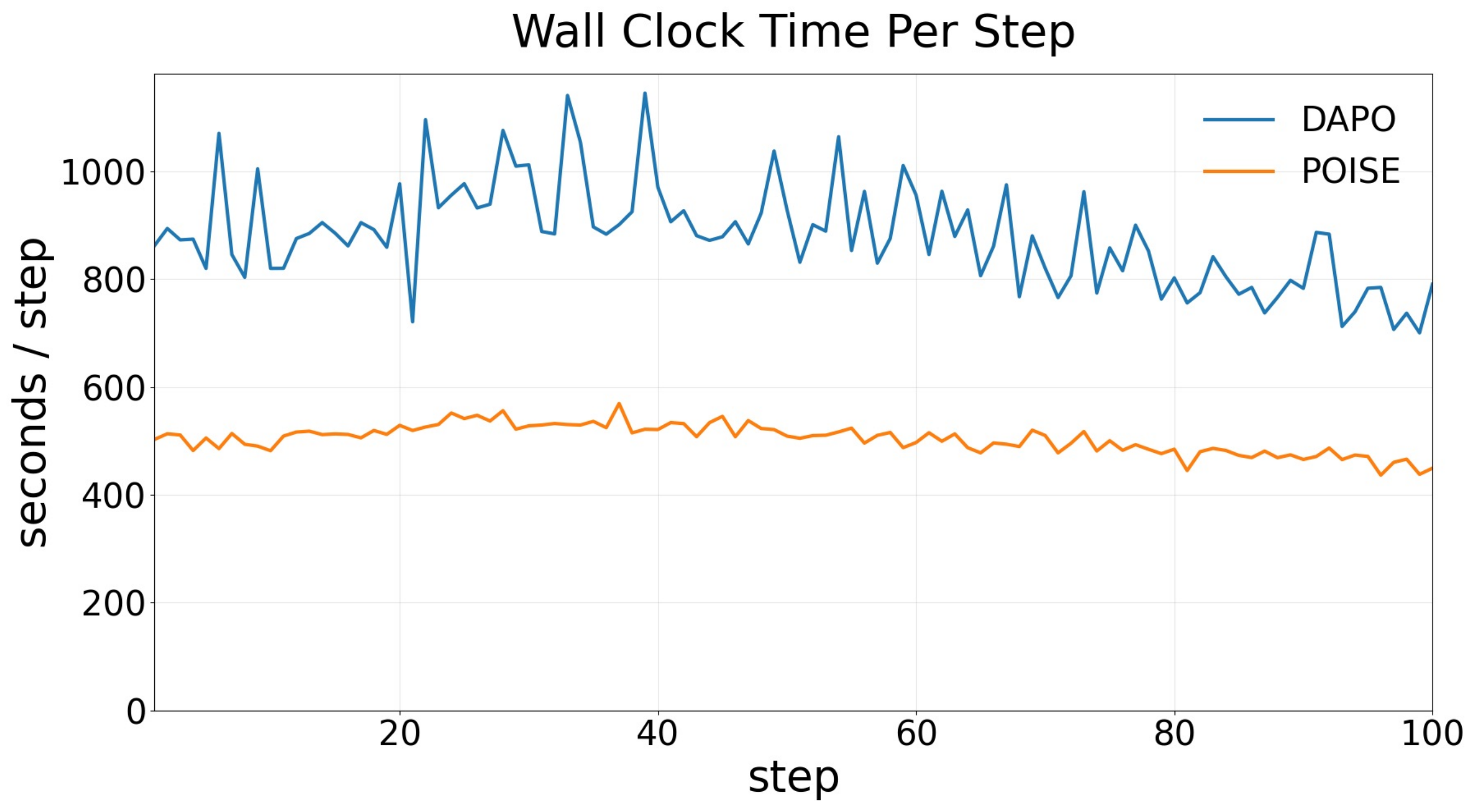}
    \end{subfigure}
    \hfill
    \begin{subfigure}[t]{0.48\textwidth}
        \centering
        \includegraphics[width=\textwidth]{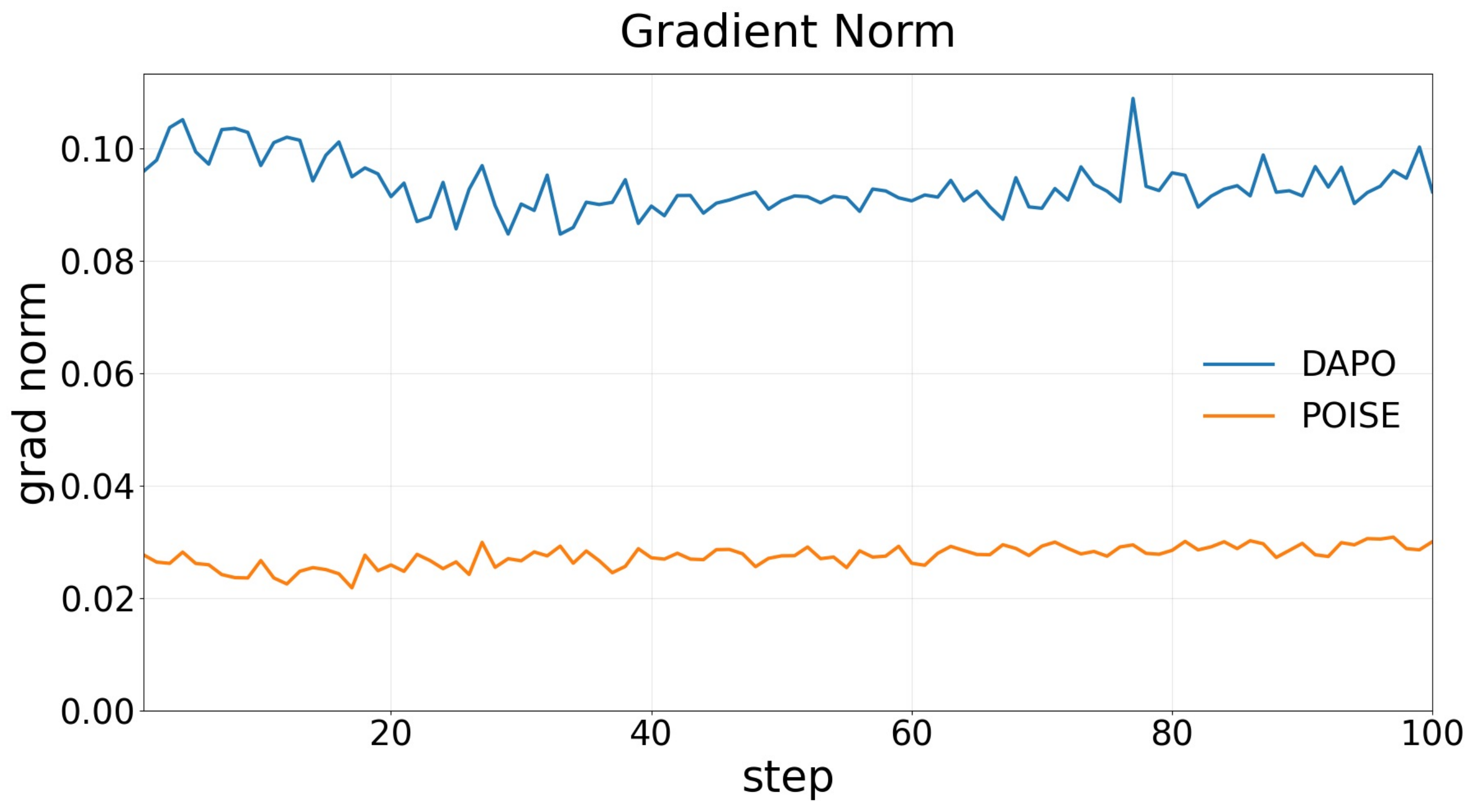}
    \end{subfigure}
    \caption{Comparison of training dynamics between POISE and DAPO on Deepseek-Distill-Qwen-1.5B. \textbf{Left:} wall-clock time per training step. \textbf{Right:} gradient norm at each step.}
    \label{fig:both}
\end{figure}

\begin{figure}[t]
    \centering
    \begin{minipage}{0.485\linewidth}
        \centering
        \includegraphics[width=\linewidth]{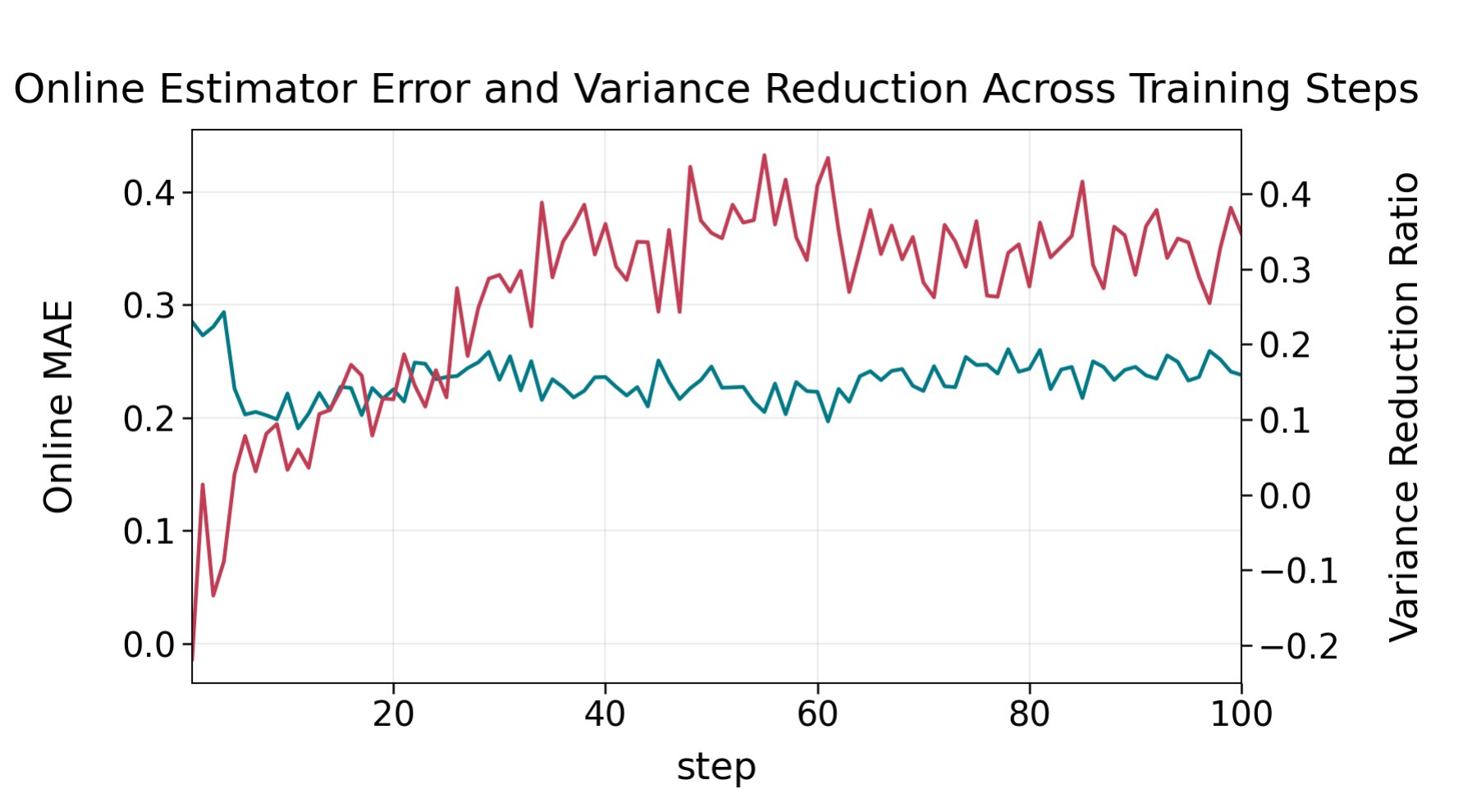}
    \caption{
    The green line reports the online
    $\mathrm{MAE}\!\left(g_f\bigl(\phi), \bar{R}_{t}\right)$,
    where $\bar{R}_{t}$ is the mean reward of the rollouts at step $t$.
    The red line reports the variance reduction ratio,
    $1 - \mathrm{Var}(A) / \mathrm{Var}(R)$,
    where $A = R - b$ is the  advantage.
    }
    \label{fig:probe_dynamics_2}    \end{minipage}
    \hfill
    \begin{minipage}{0.48\linewidth}
        \centering
        \includegraphics[width=\linewidth]{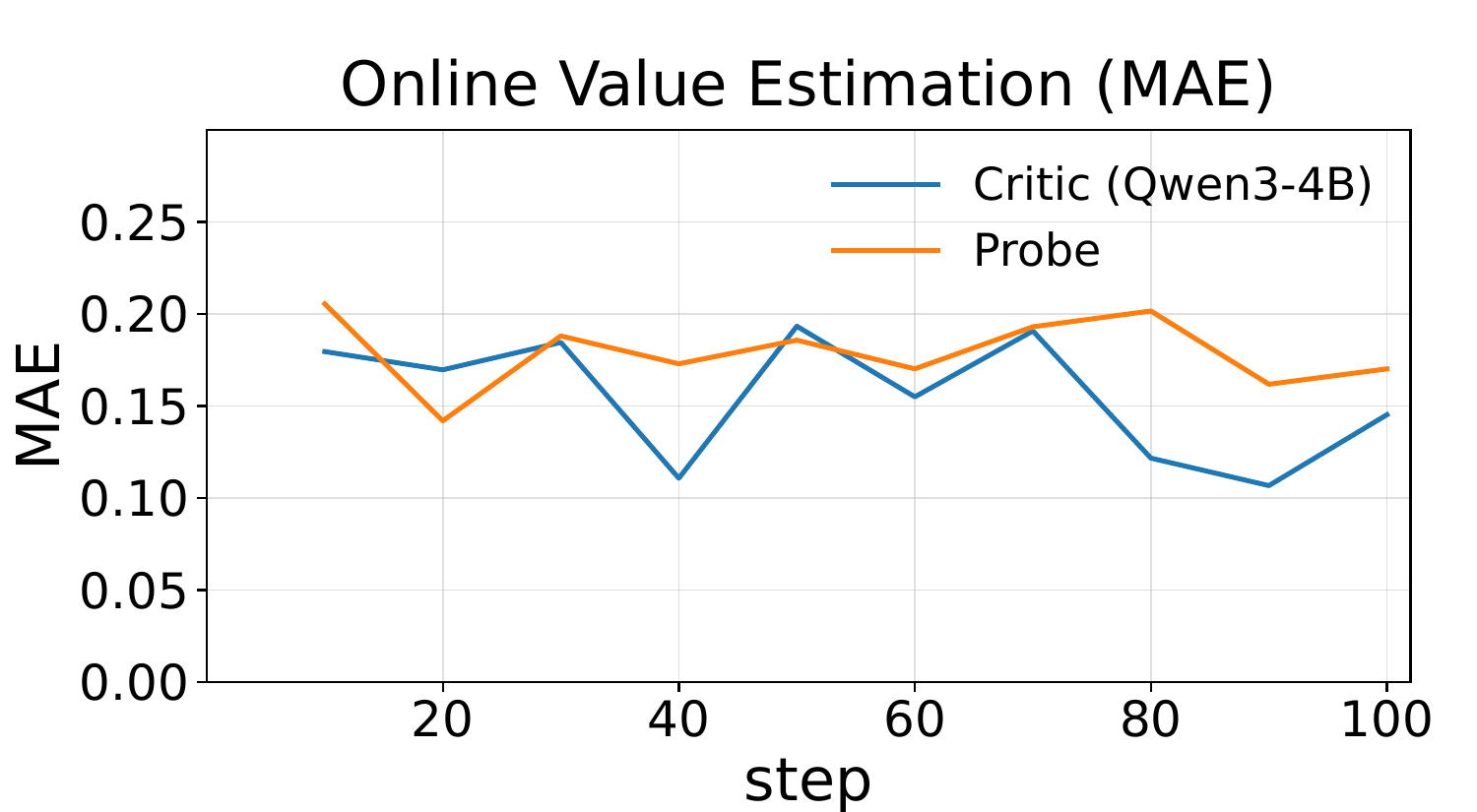}
        \caption{Comparison between our value estimator and a critic model in online settings. Our estimator remains well aligned with the evolving policy while using substantially less computation. For full results, refer to \S~\ref{app:online_results}.}
        \label{fig:online_performance_2}

    \end{minipage}

\end{figure}

\paragraph{Training efficiency and stability.}
Figure ~\ref{fig:both} (left) shows that POISE requires substantially less wall-clock time per step than DAPO. The difference comes from how the two methods obtain usable advantage signals. In DAPO, the group-mean baseline becomes uninformative when all rollouts for a prompt receive the same reward, so dynamic sampling must first generate a full group of N rollouts to check whether the prompt yields nonzero advantages. When a group is degenerate, its rollouts are excluded from the final training batch for each step, forcing DAPO to sample additional groups until enough effective examples are collected. In contrast, POISE predicts the expected verifier reward as a continuous value from internal state signals already produced during generation, thereby avoiding degeneration and saving a substantial amount of rollout compute. Concretely, in our setting, on DeepSeek-R1-Distill-Qwen-1.5B, reaching the same performance level takes roughly 24 hours of wall-clock time with DAPO on a single B200 GPU, compared to about 18 hours with POISE. 
We observe a similar trend on Qwen3-4B: POISE requires about 36 hours on two B200 GPUs, whereas DAPO takes 49 hours under the same hardware setting.

We further examine whether POISE leads to more stable optimization. DAPO and our method form the same gradient estimator through Eq.~(\ref{eq:gradient_estimator}) and differ in how the baseline $b(x)$ is constructed. 
The expected squared norm of a policy-gradient estimator decomposes into the true gradient signal and an estimation-noise term. Since the true gradient depends only on the current policy and data, methods at similar training progress should have comparable signal magnitude; differences in gradient norm therefore mainly reflect differences in estimator noise. 
Under the same batch budget, POISE fits more distinct prompts than DAPO, which, in principle, reduces gradient variance according to \S~\ref{sec:gradientvariance} and stabilizes training. Figure~\ref{fig:both} (right) confirms this empirically. Our gradient-norm stays consistently lower than DAPO's throughout training.

\paragraph{Training dynamics of value estimator.}
To evaluate whether the value estimator reliably tracks the evolving policy, we compute the online MAE (mean absolute error) between its predicted baseline values and the empirical mean reward of rollouts sampled from the current policy (Figure~\ref{fig:probe_dynamics_2}). 
The online MAE stays relatively stable across training, indicating that the estimator remains calibrated to the rewards produced by the current policy.
Meanwhile, the variance reduction ratio remains around 30\% after the initial phase, showing that the learned baseline reduces the reward variance by roughly one third when forming the advantage.
Together, these results suggest that the online-trained estimator adapts to policy changes and provides a stable baseline throughout training.

\section{Analysis of the Value Estimator}

\label{sec:estimator-analysis}
\paragraph{Comparison to an online policy-model scale critic.}
The previous training dynamics analysis evaluates whether our estimator serves as a stable baseline during policy optimization. Here, we additionally compare with a separately trained policy-model scale critic under policy drift. Using the Qwen3-4B training log from our main experiments, we train a critic model just like in \S~\ref{sec:Estimating_Baseline_Values}, but on accumulated reward-labeled rollouts and evaluate both estimators every 10 steps against empirical Avg@8 values from the corresponding actor checkpoint. As shown in Figures~\ref{fig:online_performance_2} and \ref{fig:online_performance_1p5b}, the critic is slightly more accurate, likely due to its larger capacity and continual training on the expanding rollout log. Nevertheless, our estimator closely tracks the critic while requiring only a lightweight probe over features already produced by the policy forward pass.

\paragraph{Generalizability across domains and models.}
Next, we evaluate the estimator's generalizability across multiple RLVR domains and policy models. For datasets, we include two mathematical reasoning tasks from DAPO-Math 17K~\citep{yu2025dapoopensourcellmreinforcement} and DeepScaleR~\citep{deepscaler2025}, coding tasks from AceCoder~\citep{acecoder}, tool-calling dialogues from ToolDial~\citep{shim2025tooldial}, and instruction-following tasks from IF-RLVR~\citep{IF-RLVR}. For policy models, we consider Qwen3-4B and DeepSeek-R1-Distill-Qwen-1.5B/7B. For each domain-model pair, we train both our estimator and a critic model using the same data, and compare their predictions against the actual avg@8 scores of the target policy model (For detailed settings, refer to \S~\ref{app:generalizability}). We report representative results in Table~\ref{tab:generalization-main} and the full results in Table~\ref{tab:generalization-full}. The estimator is competitive with and often more accurate than the critic model, which suggests that the policy's hidden states expose a useful signal about whether the model is likely to produce a verifiably correct response. We therefore view our main experiments on math domain (\S \ref{sec:main_results}) as a proof of concept for a more general RLVR mechanism: whenever verifiable feedback is available, a lightweight estimator can be trained online from the policy's own internal states, providing a cheap value estimate without an auxiliary critic or large rollout groups.

\begin{table}[t]
\centering
\begin{minipage}[t]{0.51\textwidth}
\centering
\small
\caption{
Performance of our estimator across multiple domains (Qwen3-4B). 
We compare against a separately trained critic and report MAE and Pearson correlation \(r\).
Full results are in Table~\ref{tab:generalization-full}.}
\label{tab:generalization-main}
\setlength{\tabcolsep}{3pt}
\renewcommand{\arraystretch}{1.08}
\resizebox{\linewidth}{!}{
\begin{tabular}{llcccc}
\toprule
\multirow{2}{*}{Domain} 
& \multirow{2}{*}{Dataset}
& \multicolumn{2}{c}{Critic}
& \multicolumn{2}{c}{Ours} \\
\cmidrule(lr){3-4} \cmidrule(lr){5-6}
& & MAE $\downarrow$ & $r$ $\uparrow$ 
  & MAE $\downarrow$ & $r$ $\uparrow$ \\
\midrule
\multirow{2}{*}{Math}
& DAPO-Math 
& 0.262 & 0.676
& \textbf{0.141} & \textbf{0.870} \\
& DeepScaleR 
& 0.393 & 0.384
& \textbf{0.231} & \textbf{0.609} \\
Coding 
& AceCoder 
& 0.499 & 0.056
& \textbf{0.234} & \textbf{0.612} \\
Tool 
& ToolDial 
& 0.303 & 0.440
& \textbf{0.188} & \textbf{0.840} \\
Instruction 
& IF-RLVR 
& 0.350 & 0.150
& \textbf{0.195} & \textbf{0.642} \\
\bottomrule
\end{tabular}
}
\end{minipage}
\hfill
\begin{minipage}[t]{0.47\textwidth}
\centering
\small
\caption{
Ablation of estimator input features (Qwen3-4B). We report MAE and Pearson correlation \(r\) after training the estimator with only one feature type.
}
\label{tab:ablation-main}
\setlength{\tabcolsep}{4pt}
\renewcommand{\arraystretch}{1.08}
\resizebox{\linewidth}{!}{
\begin{tabular}{lcc}
\toprule
Input feature & MAE $\downarrow$ & $r$ $\uparrow$ \\
\midrule
only prompt hidden states & 0.234 & 0.569 \\
only reasoning hidden states & 0.132 & 0.821 \\
only mean entropy & 0.152 & 0.780 \\
only response length & 0.251 & 0.494   \\
\midrule
Full estimator & \textbf{0.126} & \textbf{0.838} \\
\bottomrule
\end{tabular}
}
\end{minipage}
\end{table}

\paragraph{Ablations.}
We ablate our value estimator along three axes: input features, hyperparameters for hidden state extraction, and probe architecture. We first ablate the input features used by our estimator. We use the same settings as the experiment in \S~ \ref{sec:Estimating_Baseline_Values} and train a value estimator with the following features. First, we evaluate the three core features of our method---prompt hidden states, reasoning hidden states, and vocabulary entropy. Following a prior work \citep{response_length}, we also evaluate response length. As shown in Table \ref{tab:ablation-main}, trajectory-level features such as reasoning hidden states and mean entropy are influential in our value estimator's performance: using either of the two retains much of the estimator's accuracy. 
On the other hand, prompt hidden states show relatively low performance, and ablating the 
response length provides little or no improvement. 

Next, we compare hyperparameter values used during hidden state extraction, such as layer index and mean pooling token size. As detailed in \S~\ref{app:ablations_hidden_state}, our results show that mid-later layers are optimal, and the performance of the probe is not sensitive to token length. 

Lastly, we compare our linear probe design with heavier models, such as Multi-Layer Perceptrons (MLPs). 
The results (Table \ref{tab:qwen3-4b-probe-architecture-ablation}) indicate that linear regression is just as effective as---and sometimes surpasses---the MLP probes. We attribute this to findings from prior work, which demonstrate that many semantic features are encoded as linear directions within the Transformer's internal representations \citep{math_transformers, LRH}. Consequently, a linear probe is naturally well-suited to extract these value signals efficiently without the need for additional model complexity or the risk of overfitting.

\section{Related Work}
\label{related_work}
\noindent \textbf{Value Estimation in RL for LLM Reasoning. }
RL algorithms for LLMs reasoning differ mainly in how they estimate the baseline that reduces policy-gradient variance~\citep{schulman2017proximal, ouyang2022training, shao2024deepseekmath}. Recent work extends this along two axes. The first introduces explicit value models -- either a generalist value prior for sparse rollouts~\citep{zhang2026v05generalistvaluemodel} or a sequence-level value model that treats reasoning as a contextual bandit~\citep{wang2026spposequencelevelppolonghorizon} -- but incurs the substantial training, calibration, and deployment cost of an additional LLM-scale model. The second reduces rollout cost by non-uniform prompt sampling, via probabilistic informativeness-based filtering~\citep{zheng2025actpaysefficientreinforcement} or historical value tracking with global advantage normalization~\citep{xu2025singlestreampolicyoptimization}; however, both rely on initial rollouts or per-prompt reward histories, which become prohibitive when rollout generation dominates RLVR cost~\citep{huang2026pros}. In contrast, our method predicts the baseline at negligible cost by reusing the policy's hidden states and generation signals already computed during the forward pass, eliminating both an auxiliary value model and pre-collected rollouts while preserving the variance-reduction benefit of value estimation.

\noindent \textbf{Outcome-relevant Information in Hidden States.}
A growing body of work shows that the hidden states of large language models encode information relevant to assessing their outputs and task outcomes. Early studies on language model interpretability support this by showing that simple linear probes over internal representations can recover latent properties such as factuality~\citep{burns2023discovering}, truthfulness~\citep{azaria-mitchell-2023-internal, ITI_paper}, confidence~\citep{zou2023representation}, and answer correctness{}. Recent studies extend this idea to reasoning models, where activations have been used to predict final correctness~\citep{cencerrado2026no}, identify capability boundaries between solvable and unsolvable prompts~\citep{zhu-etal-2025-llm}, estimate perceived difficulty and reasoning effort~\citep{zhang2025reasoning}, and support self-verification or early stopping during generation~\citep{sui2025stop}.

Overall, prior work primarily uses hidden-state information as diagnostic signals or test-time control mechanisms. In contrast, we incorporate such signals directly into RL training by learning an online value estimator from the policy model's own hidden states, yielding a cheap baseline without requiring an auxiliary LLM-scale critic or large rollout groups.

\section{Conclusion}
We introduce \textbf{POISE} (Policy Optimization with Internal State Value Estimation), which predicts rollout value from the policy's internal states instead of relying on a group-mean baseline or a separate critic. To preserve unbiasedness, we couple this 
baseline with a cross-rollout construction. 
Our method achieves performance comparable to DAPO on mathematical reasoning benchmarks at a lower computational cost and
 more stable training. Finally, we show that our value estimator performs as well as a separate policy-scale value model and can generalize to other verifiable tasks.
\section{Limitations and Future Work}
Our experiments are conducted under a fixed compute budget, and while the trends we report are consistent across backbones and benchmarks, characterizing the behavior of internal state value estimation under substantially longer training horizons remains an interesting direction that we leave to future work with greater compute resources.

Several extensions naturally follow from our framework. First, our current estimator predicts value at the sequence level; extending it to token-level credit assignment would yield finer-grained advantages that more precisely reward the tokens responsible for a successful trajectory and penalize those that derail it, an effect known to be especially impactful for long reasoning trajectories~\citep{lin2025criticaltokensmattertokenlevel, wang2026beyond}. Second, internal state value estimates may also be useful beyond policy gradient optimization: for preference learning algorithms such as Direct Preference Optimization~\citep{rafailov2024directpreferenceoptimizationlanguage}, they could inform the construction of response pairs by identifying rollouts with meaningfully different predicted values, potentially yielding more informative preference comparisons. Third, although we focused on mathematical reasoning as a controlled testbed, applying it to RL training for agentic reasoning and instruction-following tasks is a promising next step toward broader RLVR deployment.

\section*{Acknowledgments}
We thank Haesung Pyun for helpful feedback and advice on improving the writing of this paper.

\medskip

\bibliographystyle{plainnat}
\bibliography{references.bib}

\appendix

\clearpage

\section{Theoretical Proofs}
\subsection{Proof of Proposition 1}
\label{proof:prop1}
\begin{proof}
Define:
\begin{equation*}
    \mu(x) = \mathbb{E}[Z(x,y) \mid x], \quad \Sigma_w = \mathbb{E}_x[\operatorname{Cov}(Z(x,y)\mid x)], \quad \Sigma_b = \operatorname{Cov}_x(\mu(x)).
\end{equation*}
Let $G_i = \frac{1}{m}\sum_{j=1}^m Z(x_i, G_{ij})$. Since completions within a prompt are conditionally independent given $x_i$,
\begin{equation*}
    \operatorname{Cov}(G_i \mid x_i) = \frac{1}{m}\operatorname{Cov}(Z(x_i, y)\mid x_i).
\end{equation*}
Applying the law of total covariance to $G_i$:
\begin{equation*}
    \operatorname{Cov}(G_i) = \mathbb{E}_x[\operatorname{Cov}(G_i \mid x_i)] + \operatorname{Cov}_x(\mathbb{E}[G_i \mid x_i]) = \frac{1}{m}\Sigma_w + \Sigma_b.
\end{equation*}
Since prompts are sampled independently, $\operatorname{Cov}(\hat{g}) = \frac{1}{n}\operatorname{Cov}(G_i)$. Substituting $n = B/m$:
\begin{equation*}
    \operatorname{Cov}(\hat{g}) = \frac{m}{B}\!\left(\frac{1}{m}\Sigma_w + \Sigma_b\right) = \frac{1}{B}\Sigma_w + \frac{m}{B}\Sigma_b. \qedhere
\end{equation*}
\end{proof}

\subsection{Proof of Corollary 1}
\label{proof:cor1}
\begin{proof}
From Proposition~1, for any allocation $(n, m)$ with $nm = B$,
\begin{equation*}
    \operatorname{Cov}(\hat{g}) = \frac{1}{B}\Sigma_w + \frac{m}{B}\Sigma_b.
\end{equation*}
Consider any two allocations $m_1 < m_2$ with the same budget $B$. Their difference is
\begin{equation*}
    \operatorname{Cov}(\hat{g}_{m_2}) - \operatorname{Cov}(\hat{g}_{m_1}) = \frac{m_2 - m_1}{B}\,\Sigma_b.
\end{equation*}
Since $\Sigma_b = \operatorname{Cov}_x(\mu(x))$ is a covariance matrix, it is positive semidefinite, so for any vector $v$,
\begin{equation*}
    v^\top \bigl(\operatorname{Cov}(\hat{g}_{m_2}) - \operatorname{Cov}(\hat{g}_{m_1})\bigr)v = \frac{m_2 - m_1}{B}\, v^\top \Sigma_b\, v \geq 0.
\end{equation*}
Therefore $\operatorname{Cov}(\hat{g}_{m_2}) \succeq \operatorname{Cov}(\hat{g}_{m_1})$ for any $m_2 > m_1$, meaning the variance of $\hat{g}$ in every gradient direction is non-decreasing in $m$. The minimum is thus attained at $m = 1$, giving $n = B$. \qedhere
\end{proof}
\clearpage
\section{Implementation Details}
\label{implementation}
\subsection{Pseudocode for POISE}
\begin{algorithm}[H]
\caption{POISE}
\label{alg:vepo}
\begin{algorithmic}[1]
\Require Prompt distribution $\mathcal{D}$; initial policy $\pi_{\theta_0}$;
initial value estimator $g_{f_0}$; prompt batch size $M$;
value-buffer size $n$; PPO clip $\epsilon$.
\State Initialize $\theta \gets \theta_0$, $f \gets f_0$, value buffer $\mathcal{B}_V \gets \emptyset$.
\For{step $= 1, 2, \ldots, T$}
    \State $\theta_{\mathrm{old}} \gets \theta$
    \State Sample a mini-batch of prompts $\{x_b\}_{b=1}^{M} \sim \mathcal{D}$.
    \State $\mathcal{R} \gets \emptyset$, $\mathcal{S}_V \gets \emptyset$
    \For{each prompt $x_b$}
        \State Sample
        $y_b^{(1)}, y_b^{(2)} \overset{\mathrm{i.i.d.}}{\sim}
        \pi_{\theta_{\mathrm{old}}}(\cdot \mid x_b)$.
        \State Extract internal state features
        $\phi_b^{(i)} \gets \phi_{\theta_{\mathrm{old}}}(x_b,y_b^{(i)})$
        for $i \in \{1,2\}$ via forward hooks.
        \State Compute rewards $r_b^{(i)} \gets R(x_b, y_b^{(i)})$ for $i \in \{1,2\}$.
        \State Compute cross-rollout baselines:
        \Statex \hspace{1.4em}
        $b_b^{(1)} \gets g_f\bigl(\phi_b^{(2)}\bigr), \quad
         b_b^{(2)} \gets g_f\bigl(\phi_b^{(1)}\bigr).$
        \State Compute advantages
        $A_b^{(i)} \gets r_b^{(i)} - b_b^{(i)}$ for $i \in \{1,2\}$.
        \State Add policy-update examples:
        \Statex \hspace{1.4em}
        $\mathcal{R} \gets \mathcal{R} \cup
        \bigl\{\bigl(x_b, y_b^{(i)}, A_b^{(i)}\bigr)\bigr\}_{i=1,2}$.
        \State Add value-estimator examples:
        \Statex \hspace{1.4em}
        $\mathcal{S}_V \gets \mathcal{S}_V \cup
        \left\{
        \bigl(\phi_b^{(1)}, r_b^{(2)}\bigr),
        \bigl(\phi_b^{(2)}, r_b^{(1)}\bigr)
        \right\}$.
    \EndFor
    \State Update $\theta$ by maximizing the PPO objective in Eq.~\eqref{eq:ppo_loss} on $\mathcal{R}$.
    \State Update $f$ by minimizing
    \[
    \sum_{(\phi,\widetilde r)\in \mathcal{B}_V \cup \mathcal{S}_V}
    \bigl(g_f(\phi)-\widetilde r\bigr)^2 .
    \]
    \State Append $\mathcal{S}_V$ to $\mathcal{B}_V$; evict examples older than $n$ steps.
\EndFor
\State \Return $\pi_\theta$, $g_f$.
\end{algorithmic}
\end{algorithm}

\subsection{Baseline Algorithm}
\label{app:dapo}

We adopt DAPO~\citep{yu2025dapoopensourcellmreinforcement} as our baseline RL algorithm.
For each prompt $q\sim\mathcal{D}$, a group of $G$ responses
$\{o^{(i)}\}_{i=1}^{G}$ is sampled from the old policy $\pi_{\theta_{\text{old}}}$,
and the following objective is optimized:
\begin{equation}
\begin{aligned}
\mathcal{J}_{\text{DAPO}}(\theta)
&= \mathbb{E}_{(q,a)\sim\mathcal{D},\,\{o^{(i)}\}_{i=1}^{G}\sim\pi_{\theta_{\text{old}}}(\cdot\mid q)} \\
&\quad \Bigg[\,\frac{1}{\sum_{i=1}^{G}|o^{(i)}|}
\sum_{i=1}^{G}\sum_{t=1}^{|o^{(i)}|}
\min\!\Big(\, r_{t}^{(i)}(\theta)\,\hat{A}_{t}^{(i)},\;
\mathrm{clip}\!\big(r_{t}^{(i)}(\theta),\, 1-\varepsilon_{\text{low}},\, 1+\varepsilon_{\text{high}}\big)\hat{A}_{t}^{(i)} \Big)\Bigg],
\end{aligned}
\label{eq:dapo}
\end{equation}
\begin{equation*}
\text{s.t.}\quad 0 \;<\; \big|\{\,o^{(i)} \mid \mathrm{is\_equivalent}(a, o^{(i)})\,\}\big| \;<\; G,
\end{equation*}
where the importance ratio and the group-relative advantage are
\begin{equation*}
r_{t}^{(i)}(\theta)=\frac{\pi_{\theta}(o_{t}^{(i)}\mid q,o_{<t}^{(i)})}{\pi_{\theta_{\text{old}}}(o_{t}^{(i)}\mid q,o_{<t}^{(i)})},
\qquad
\hat{A}_{t}^{(i)}=\frac{R^{(i)}-\mathrm{mean}(\{R^{(j)}\}_{j=1}^{G})}{\mathrm{std}(\{R^{(j)}\}_{j=1}^{G})}.
\end{equation*}

DAPO augments GRPO with four key techniques:
(i) \textbf{Clip-Higher} decouples the lower and upper clipping bounds
($\varepsilon_{\text{high}}>\varepsilon_{\text{low}}$), giving low-probability tokens
more room to be promoted and mitigating entropy collapse;
(ii) \textbf{Dynamic Sampling} filters out prompts whose responses are all correct
or all incorrect, ensuring that every batch yields effective gradients;
(iii) \textbf{Token-Level Policy Gradient Loss} normalizes by the total token count
$\sum_i|o^{(i)}|$ rather than per-sequence, so longer responses contribute proportionally
to the loss;
(iv) \textbf{Overlong Reward Shaping} applies a length-aware penalty to truncated
samples to reduce reward noise.

\paragraph{Reward.}
Unlike the original DAPO, we drop the length penalty and use a purely binary reward based solely on correctness for simplicity:
\begin{equation*}
R^{(i)} \;=\;
\begin{cases}
1, & \text{if } o^{(i)} \text{ is correct},\\
0, & \text{otherwise.}
\end{cases}
\end{equation*}

\subsection{Value Estimator}
\label{app:hidden-state-value-estimator}

Hidden states used for estimator training are collected from the model's teacher-forced
log probability forward pass, which is already required by our policy optimization algorithm and therefore does not need additional computation. Specifically, the
implementation registers a forward hook on one transformer layer during this log probability
computation, and pools prompt hidden states and reasoning hidden states separately.
 
Table~\ref{tab:hidden-state-estimator-hparams} lists the hyperparameters used by
the hidden-state value estimator. The prompt and reasoning hidden-state features
are obtained by mean-pooling the last \(10\) prompt tokens and the last \(10\)
reasoning tokens, respectively. For a pair of rollouts
\((y^{(1)},y^{(2)})\) from the same prompt, the supervised target for the feature
row of \(y^{(i)}\) is the paired rollout reward
\(\widetilde{R}^{(i)}=R(x,y^{(j)})\), \(j\neq i\).

\begin{table}[H]
    \centering
    \small
    \begin{tabular}{p{0.42\linewidth}p{0.40\linewidth}}
        \toprule
        \textbf{Component} & \textbf{Setting} \\
        \midrule
        Hidden layer &         Qwen3-4B: \(19\) 
        \newline
        DeepSeek-R1-Distill-Qwen-1.5B: \(19\) \\
        Prompt hidden state pooling & Last 10 prompt token mean \\
        Reasoning hidden state pooling & Last 10 reasoning token mean \\
        Scalar features & \(3\) entropy statistics \\
        Final estimator input dimension 
        & \(2d_{\mathrm{model}}+3\) 
        \newline
        Qwen3-4B: \(5123\) 
        \newline
        DeepSeek-R1-Distill-Qwen-1.5B: \(3075\) \\
        Regressor & StandardScaler + Ridge \\
        Ridge penalty & \(\alpha=0.01\) \\
        Random seed & \(42\) \\
        Prediction range & Clipped to \([0,1]\) \\
        \bottomrule
    \end{tabular}
    \caption{Hyperparameters for the hidden-state value estimator. Here,
    \(d_{\mathrm{model}}\) denotes the hidden-state dimension of each policy backbone.}
    \label{tab:hidden-state-estimator-hparams}
\end{table}

\subsection{Training details}
\label{appendix:hyperparametersfortraining}
We instantiate our method on two reasoning model backbones, Qwen3-4B and DeepSeek-R1-Distill-Qwen-1.5B, with all training implemented in the \texttt{verl} library and the maximum response length capped at 8{,}192 tokens. As training data, we use DAPO-Math-17K~\citep{yu2025dapoopensourcellmreinforcement} after filtering out Chinese-language prompts, yielding an English-only mathematical reasoning corpus. We use a training batch size of 1{,}024 prompts for Qwen3-4B and 512 prompts for DeepSeek-R1-Distill-Qwen-1.5B, and sample rollouts with temperature 1.0 and top-p 1.0. All experiments are run on B200 GPUs. Our main baseline is DAPO~\citep{yu2025dapoopensourcellmreinforcement}, for which we adopt the implementation and hyperparameters of \citet{zheng2025actpaysefficientreinforcement}, which improves the efficiency of DAPO's dynamic sampling. DAPO baseline draws 8 rollouts per prompt while our method draws a pair of rollouts per prompt and forms its baseline through the cross-rollout value probe described in \S~\ref{sec:POISE}. Detailed hyperparameter values are reported in Table~\ref{tab:hyperparameters_poise} and ~\ref{tab:hyperparameters_greso}.

\begin{table}[h]
\centering
\begin{tabular}{p{0.42\linewidth}p{0.34\linewidth}}
\toprule
\textbf{Hyperparameter} & \textbf{Value} \\
\midrule
algorithm                 & POISE \\
training steps            & 120 \\
train batch size          & 512 \\
mini batch size           & 16 \\
max prompt length         & 2048 \\
max response length       & 8192 \\
learning rate             & $1 \times 10^{-6}$ \\
clip ratio (low / high)   & 0.2 / 0.28 \\
entropy coefficient       & 0 \\
use KL loss               & False \\
sampling temperature      & 1.0 \\
sampling top-p            & 1.0 \\
samples per prompt        & 2 \\
max batched tokens        & 10240 \\
\bottomrule
\end{tabular}
\caption{Key training hyperparameters used in POISE.}
\label{tab:hyperparameters_poise}
\end{table}

\begin{table}[h]
\centering
\begin{tabular}{p{0.42\linewidth}p{0.34\linewidth}}
\toprule
\textbf{Hyperparameter} & \textbf{Value} \\
\midrule
algorithm                 & DAPO \\
training steps            & 100 \\
train batch size          & 128 \\
mini batch size           & 16 \\
$p_{\mathrm{easy}}$       & 0.5 \\
$p_{\mathrm{hard}}$       & 0.5 \\
target zero variance      & 0.25 \\
default br size           & 192 \\
GRESO min $p$             & 0.05 \\
GRESO max $p$             & 0.95 \\
$\beta$                   & 1.25 \\
max prompt length         & 2048 \\
max response length       & 8192 \\
learning rate             & $1 \times 10^{-6}$ \\
clip ratio (low / high)   & 0.2 / 0.28 \\
entropy coefficient       & 0 \\
use KL loss               & False \\
sampling temperature      & 1.0 \\
sampling top-p            & 1.0 \\
samples per prompt        & 8 \\
max batched tokens        & 10240 \\
\bottomrule
\end{tabular}
\caption{Key training hyperparameters used in DAPO with efficient dynamic sampling~\citep{zheng2025actpaysefficientreinforcement}.}
\label{tab:hyperparameters_greso}
\end{table}

\subsection{Evaluation Protocol}
\label{app:eval}
The following section details the benchmarks used in our experiments and our
evaluation protocol.
\paragraph{Benchmarks.}
\begin{itemize}
    \item \textbf{AMC23} and \textbf{AMC24} are problem sets from the American Mathematics
    Competition (AMC)~\cite{maa_amc}, a series of contests for high school students that test problem-solving
    ability across a wide range of topics. We employ the 2023 and 2024 editions, consisting
    of 40 problems each (80 problems in total).
    \item \textbf{AIME24}, \textbf{AIME25}, and \textbf{AIME26} are problem sets from the
    American Invitational Mathematics Examination (AIME)~\cite{maa_aime}, a prestigious competition featuring
    challenging problems that require sophisticated mathematical reasoning. We employ the
    2024, 2025, and 2026 editions, consisting of 30 problems each (90 problems in total).
    \item \textbf{HMMT25} consists of problems from the February 2025 Harvard--MIT
    Mathematics Tournament (HMMT)~\cite{hmmt2025feb}, one of the most demanding high-school mathematics
    competitions in the United States. We use the individual-round problems, consisting
    of 30 problems.
    \item \textbf{BRUMO25} is the 2025 edition of the Brown University Mathematics Olympiad
    (BRUMO)~\cite{brumo2025}, an annual olympiad-level competition for advanced high-school students.
    We use the official 2025 problem set, consisting of 30 problems.
\end{itemize}

\paragraph{Evaluation Protocol}
\label{app:eval-protocol}
For all benchmarks above, we follow the officially recommended decoding setting with
temperature $0.6$ and top-$p$ $0.95$, and set the maximum response length to
8192 tokens the same as the training setting. All inferences are performed with the vLLM
library~\citep{kwon2023vllm} on a single node equipped with two NVIDIA B200 GPUs.
For each problem, we independently sample $32$ completions.

We define a binary reward $r^{(ij)}\in\{0,1\}$ equal to $1$ if the $j$-th
response to problem $i$ yields a correct final answer and $0$ otherwise; the
same reward function is used during RL training and evaluation. Given a
test set with $M$ problems, we report:
\begin{itemize}
    \item \textbf{avg@$k$}: the expected per-sample correctness,
    \begin{equation*}
        \mathrm{avg@}k \;=\; \frac{1}{M}\sum_{i=1}^{M}\frac{1}{k}\sum_{j=1}^{k} r^{(ij)}.
    \end{equation*}
    
\end{itemize}
Throughout the paper we use $k=32$ for both metrics.

\FloatBarrier
\subsection{Prompt Template}
\label{appendix:prompt}
We use the following single-turn prompt template, following DAPO~\citep{yu2025dapoopensourcellmreinforcement}, for all mathematical reasoning tasks. The placeholder \texttt{\{problem\}} is replaced with the problem statement at training and inference time.

\begin{tcolorbox}[
    colback=gray!5,
    colframe=gray!50,
    title=\textbf{Prompt Template},
    fonttitle=\bfseries,
    boxrule=0.5pt,
    arc=2pt,
]
\small
Solve the following math problem step by step. The last line of your response should be of the form Answer: \$Answer (without quotes) where \$Answer is the answer to the problem.

\vspace{0.5em}
\texttt{\{problem\}}
\vspace{0.5em}

Remember to put your answer on its own line after ``Answer:''.
\end{tcolorbox}

\clearpage
\section{Training Dynamics}
\label{app:training_dynamics}

We provide additional training dynamics for POISE on
Qwen3-4B and DeepSeek-R1-Distill-Qwen-1.5B. In the main text, we report aggregate
evidence that the internal state value estimator remains stable during training.
Here, we expand the analysis by tracking five quantities over policy updates:
the batch reward, the predicted value, token-level entropy, the value-estimation
error against the online target, and the advantage variance ratio.

For each training step \(t\), let \(R_t\) denote the verifier reward of the
sampled rollouts, \(b_t=g_f(\phi_t)\) the predicted baseline, and
\(A_t=R_t-b_t\) the resulting advantage. We report the batch reward
\(\bar R_t=\mathbb{E}_{\mathrm{batch}}[R_t]\), the mean predicted value
\(\bar b_t=\mathbb{E}_{\mathrm{batch}}[b_t]\), the online target error
\(\mathrm{MAE}_t=\mathbb{E}_{\mathrm{batch}}[|b_t-\widehat V_t(x)|]\), and the
advantage variance ratio
\(\rho_t=\mathrm{Var}_{\mathrm{batch}}(A_t)/\mathrm{Var}_{\mathrm{batch}}(R_t)\).
Here, \(\widehat V_t(x)\) is the empirical online target estimated from rollouts
sampled from the current checkpoint policy. The ratio \(\rho_t\) measures how
much variance remains after subtracting the learned baseline; values below one
indicate that the estimator reduces variance relative to using the raw reward.

\paragraph{Qwen3-4B.}
Figure~\ref{fig:training-dynamics-qwen4b} shows the full dynamics for Qwen3-4B.
The batch reward increases rapidly during the first phase of training, from
roughly 0.35 to above 0.65, and then stabilizes around 0.70. The
predicted value follows the same broad trend, rising from approximately 0.30
to the 0.70-0.75 range. This agreement suggests that the online
estimator tracks the changing reward scale as the policy improves, rather than
remaining calibrated only to the initial policy.

The online target MAE decreases sharply in early training and then remains in a
stable range. This is important because the value function itself is nonstationary
during RL: as the actor improves, the target expected reward for each prompt also
changes. The stability of the MAE therefore indicates that the sliding-buffer
update is sufficient for tracking the evolving policy. The advantage variance
ratio remains below one for most of training and stabilizes after the initial
phase, showing that the learned baseline continues to reduce variance after the
policy has moved away from its initialization. Finally, entropy increases rather
than collapsing, suggesting that POISE does not obtain its gains by prematurely
concentrating the policy distribution.
\begin{figure}[b]
    \centering
    \includegraphics[width=0.55\linewidth]{figures/vedynamic.pdf}
    \caption{
    The green line reports the online
    $\mathrm{MAE}\!\left(g_f\bigl(\phi), \bar{R}_{t}\right)$,
    where $\bar{R}_{t}$ is the mean reward of the rollouts at step $t$.
    The red line reports the variance reduction ratio,
    $1 - \mathrm{Var}(A) / \mathrm{Var}(R)$,
    where $A = R - b$ is the advantage.
    }
    \label{fig:probe_dynamics}
\end{figure}
\begin{figure}[H]
    \centering
    \begin{subfigure}[t]{0.32\linewidth}
        \centering
        \includegraphics[width=\linewidth]{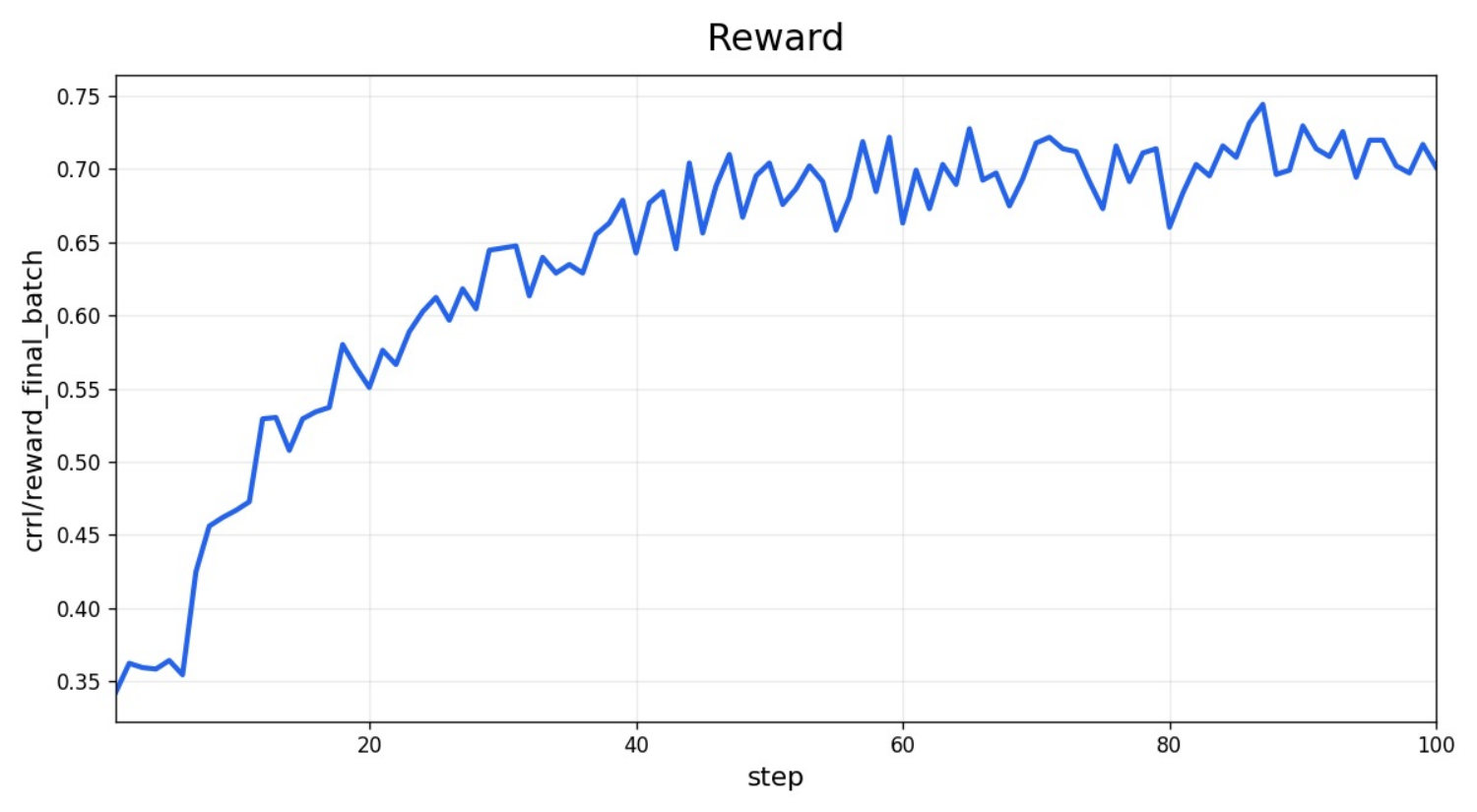}
        \caption{Batch reward}
    \end{subfigure}
    \hfill
    \begin{subfigure}[t]{0.32\linewidth}
        \centering
        \includegraphics[width=\linewidth]{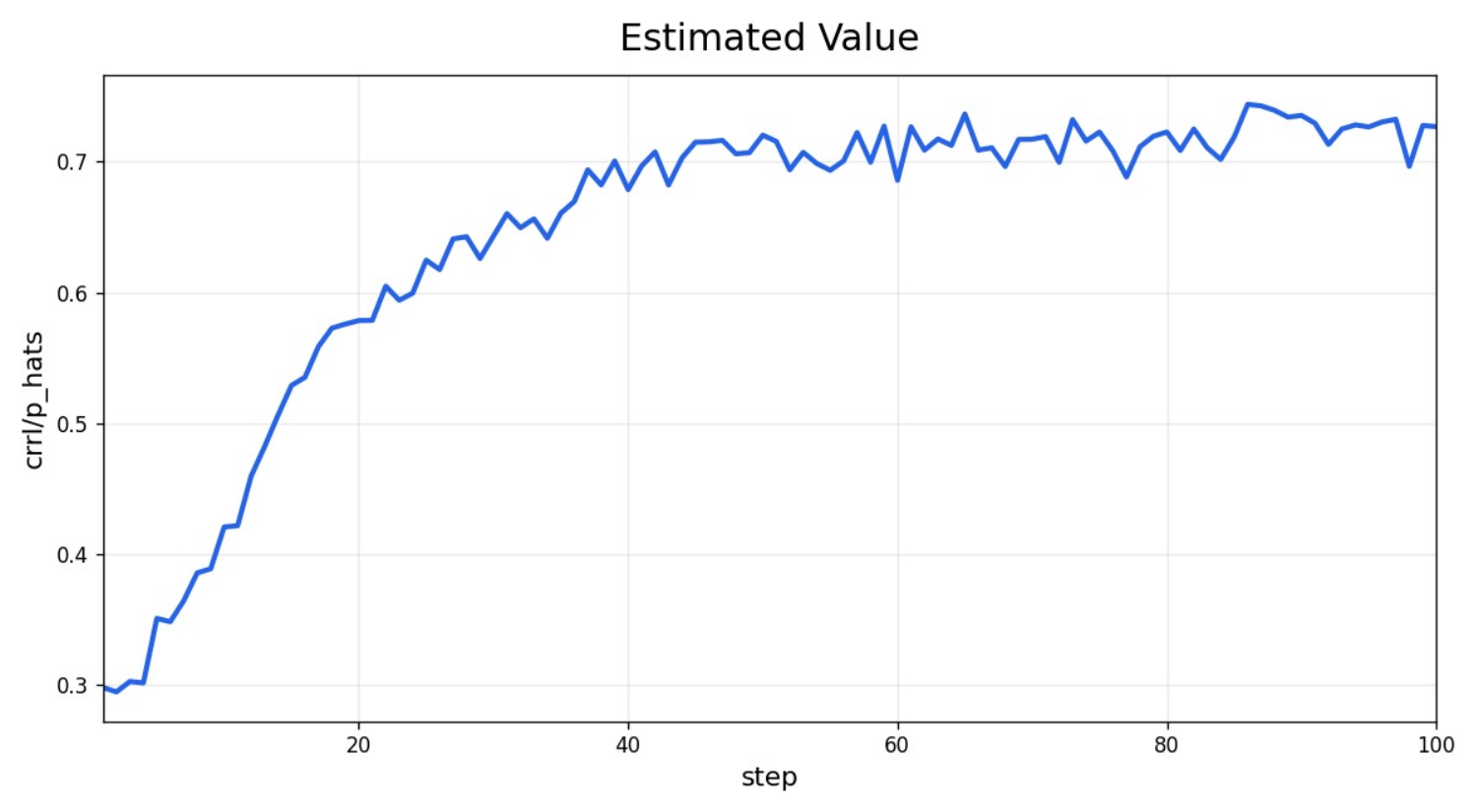}
        \caption{Estimated value}
    \end{subfigure}
    \hfill
    \begin{subfigure}[t]{0.32\linewidth}
        \centering
        \includegraphics[width=\linewidth]{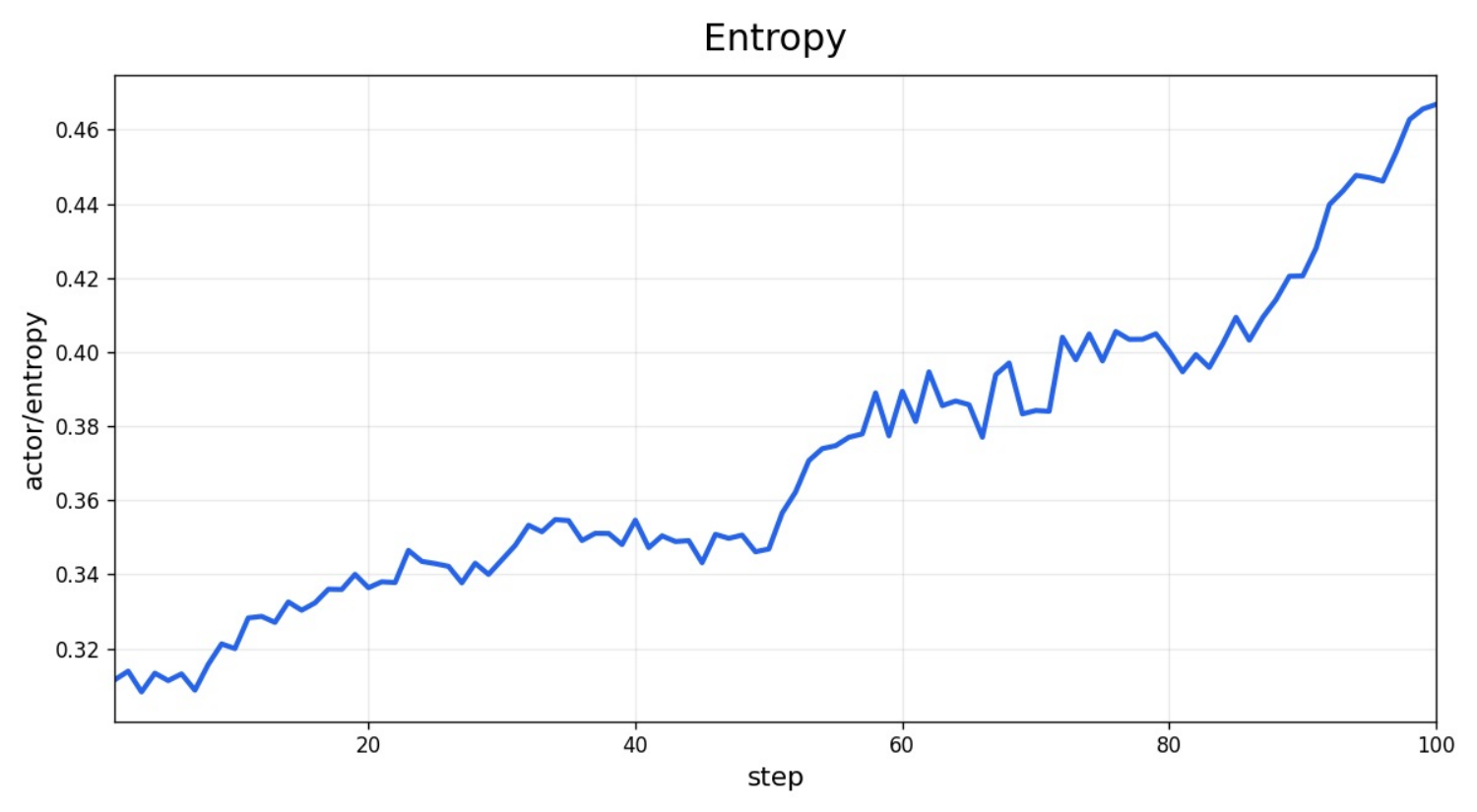}
        \caption{Entropy}
    \end{subfigure}

    \vspace{0.75em}

    \begin{subfigure}[t]{0.32\linewidth}
        \centering
        \includegraphics[width=\linewidth]{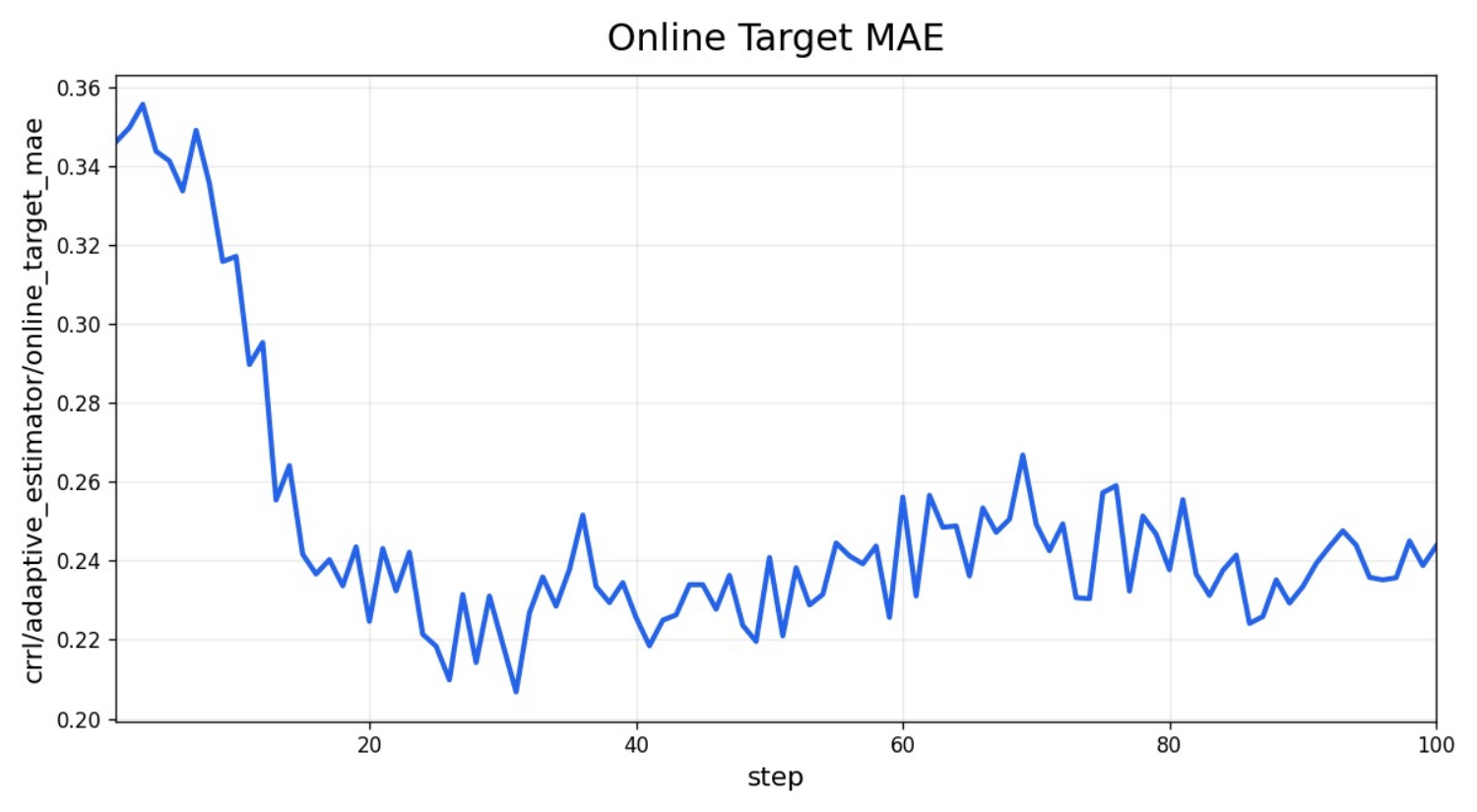}
        \caption{Online target MAE}
    \end{subfigure}
    \hfill
    \begin{subfigure}[t]{0.32\linewidth}
        \centering
        \includegraphics[width=\linewidth]{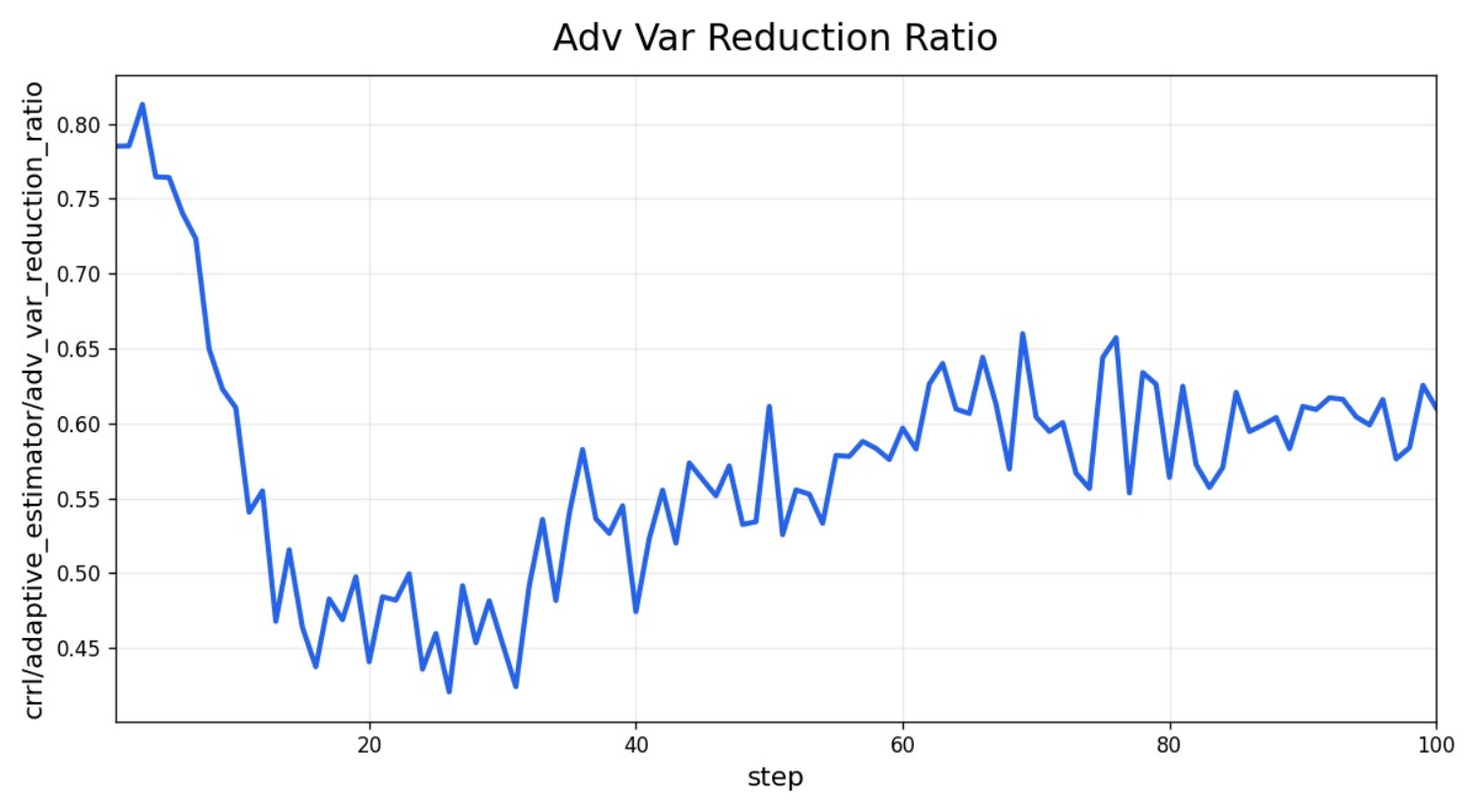}
        \caption{Advantage variance ratio}
    \end{subfigure}
    \caption{
    Training dynamics of POISE on Qwen3-4B. The reward and predicted value
    increase together as the policy improves. The online target MAE remains
    stable after the early phase, and the advantage variance ratio stays below
    one for most of training, indicating that the learned baseline reduces
    reward variance when forming advantages.
    }
    \label{fig:training-dynamics-qwen4b}
\end{figure}

\paragraph{DeepSeek-R1-Distill-Qwen-1.5B.}
Figure~\ref{fig:training-dynamics-ds15b} reports the same diagnostics for
DeepSeek-R1-Distill-Qwen-1.5B. The smaller backbone shows noisier dynamics, which
is expected because its reward distribution is lower and more variable. Even in
this setting, the reward improves throughout training, increasing from roughly
0.15-0.20 in the initial phase to around 0.45-0.50 by the end
of training. The estimated value also increases over the course of training,
although it exhibits an early drop before recovering. This transient mismatch
likely reflects the difficulty of fitting the online estimator in the first few
updates, when the buffer contains limited data and the policy distribution changes
quickly.

After this initial phase, the estimator becomes more stable. The online target
MAE remains bounded and does not diverge as training proceeds, indicating that
the estimator continues to track the current policy despite the nonstationary
target. The advantage variance ratio also falls below one after the early updates
and remains there for most of the run, showing that the learned baseline provides
variance reduction even for the smaller and noisier policy. Entropy rises during
the early-to-middle phase and then fluctuates mildly, suggesting that POISE
maintains policy stochasticity rather than driving immediate entropy collapse.

\begin{figure}[H]
    \centering
    \begin{subfigure}[t]{0.32\linewidth}
        \centering
        \includegraphics[width=\linewidth]{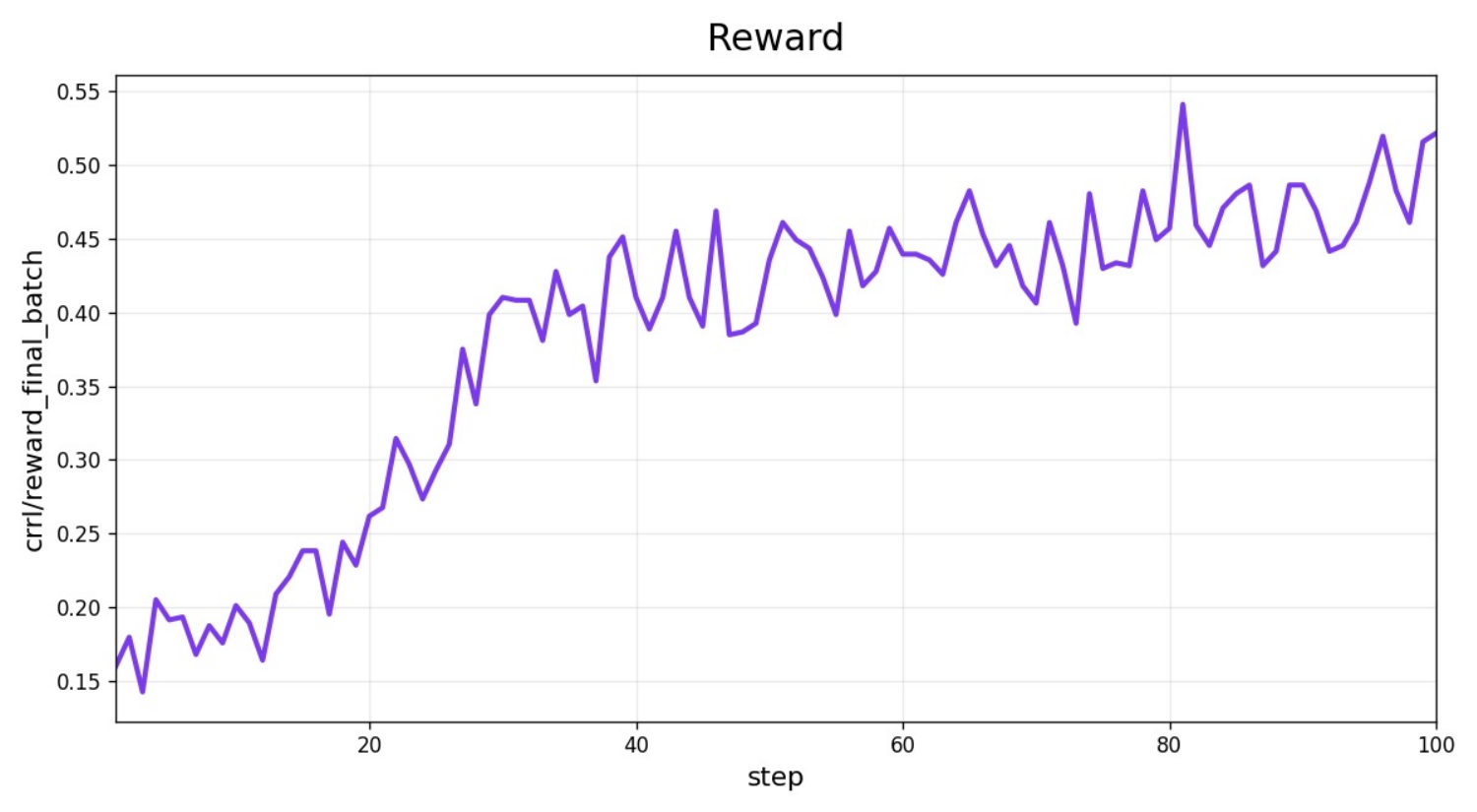}
        \caption{Batch reward}
    \end{subfigure}
    \hfill
    \begin{subfigure}[t]{0.32\linewidth}
        \centering
        \includegraphics[width=\linewidth]{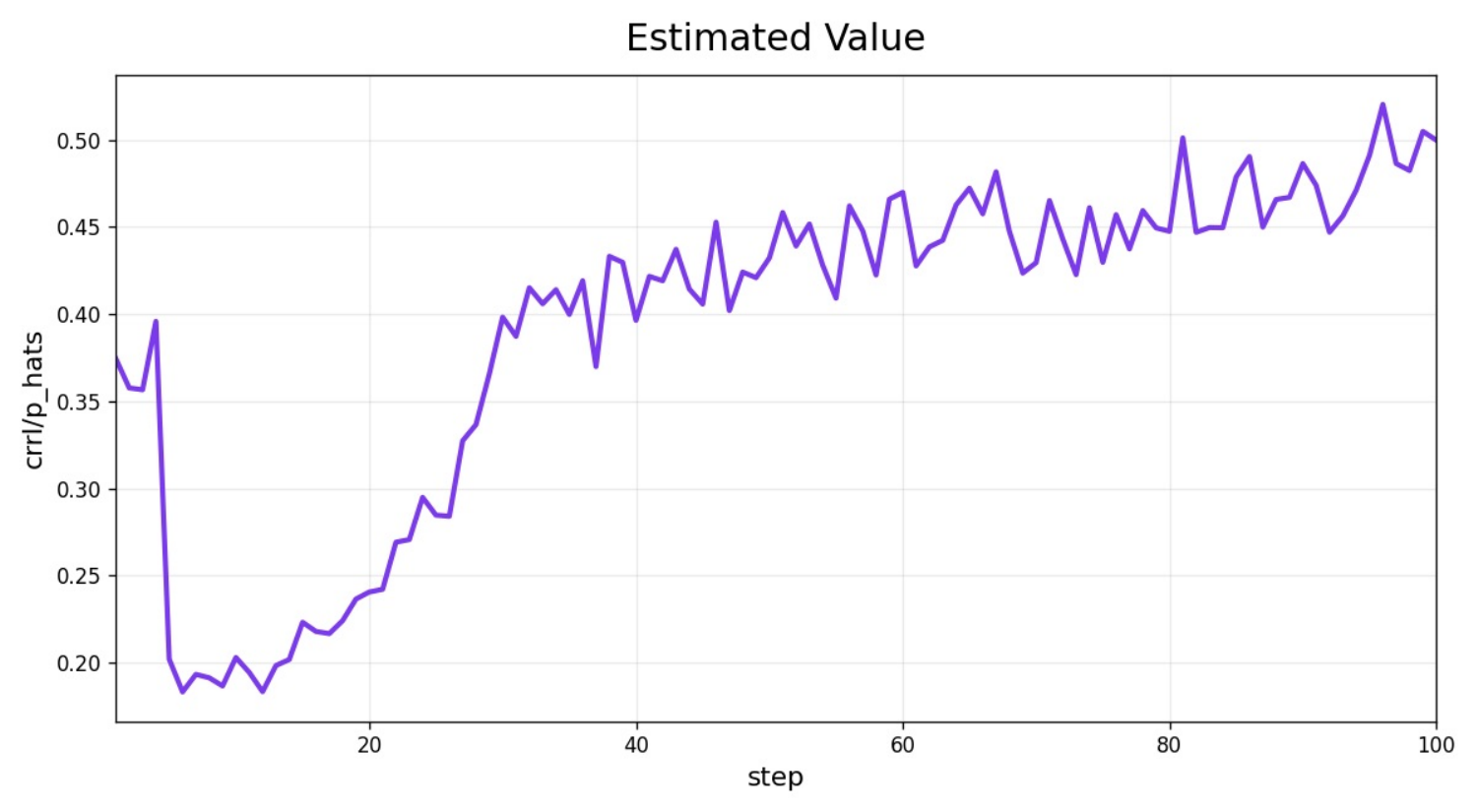}
        \caption{Estimated value}
    \end{subfigure}
    \hfill
    \begin{subfigure}[t]{0.32\linewidth}
        \centering
        \includegraphics[width=\linewidth]{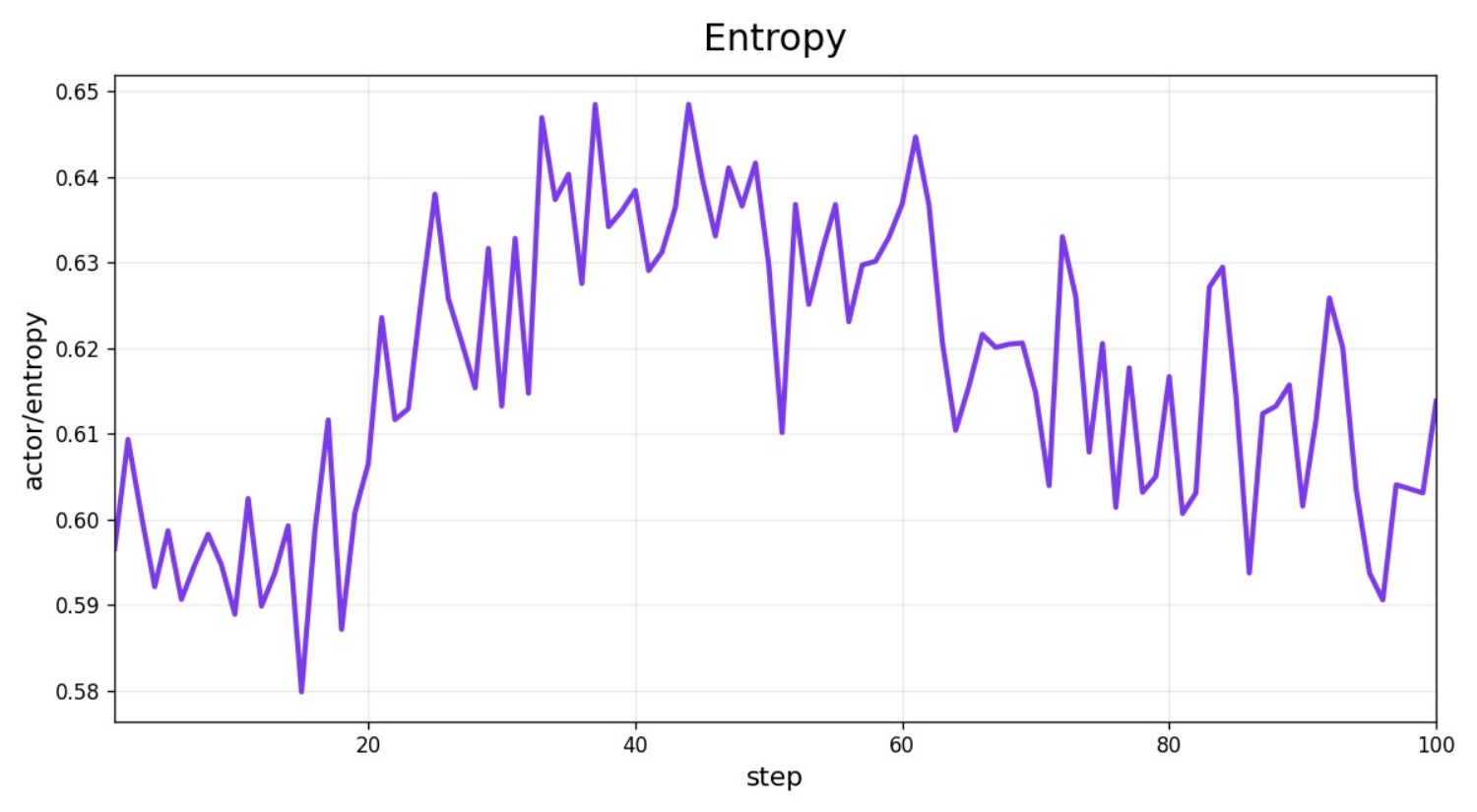}
        \caption{Entropy}
    \end{subfigure}

    \vspace{0.75em}

    \begin{subfigure}[t]{0.32\linewidth}
        \centering
        \includegraphics[width=\linewidth]{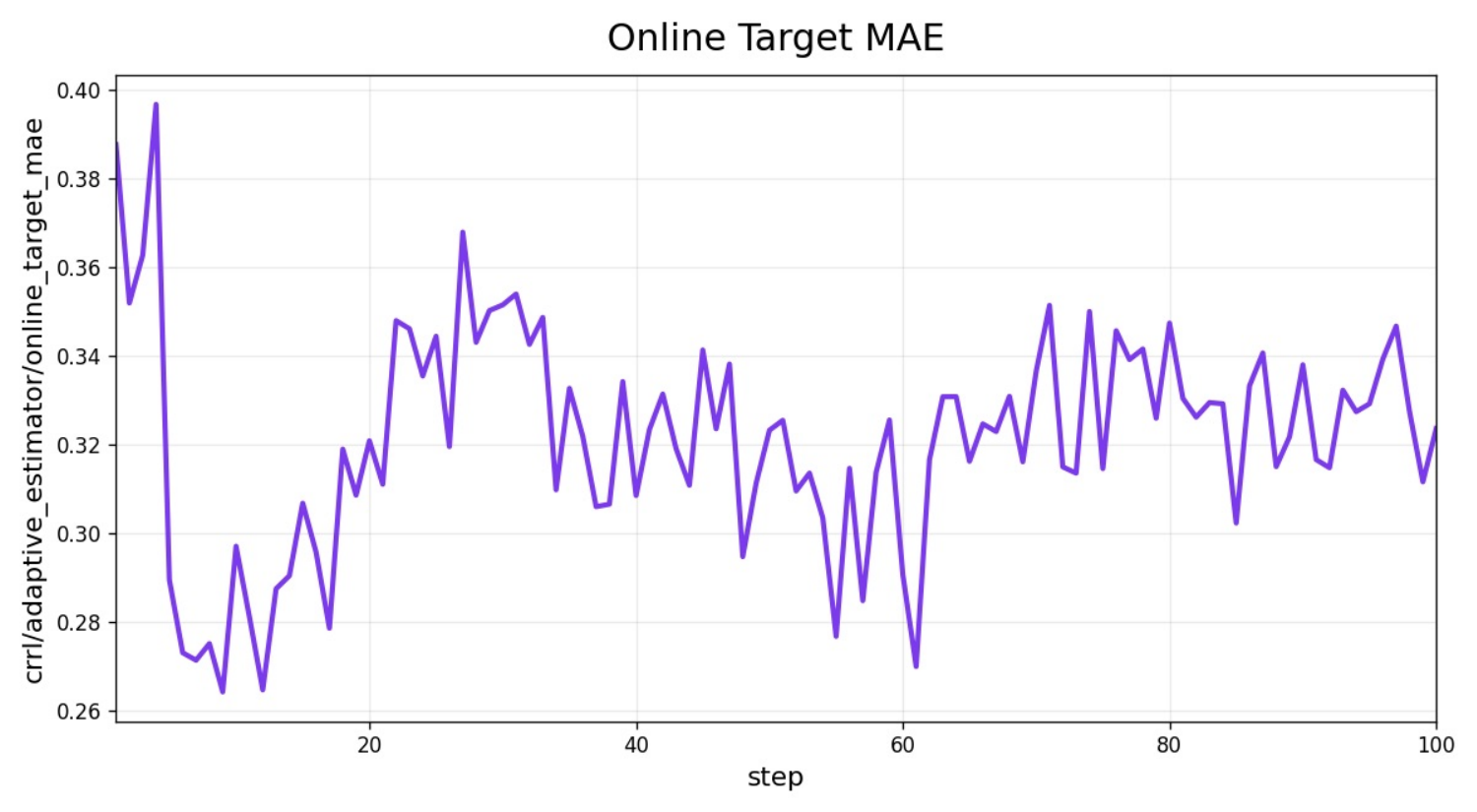}
        \caption{Online target MAE}
    \end{subfigure}
    \hfill
    \begin{subfigure}[t]{0.32\linewidth}
        \centering
        \includegraphics[width=\linewidth]{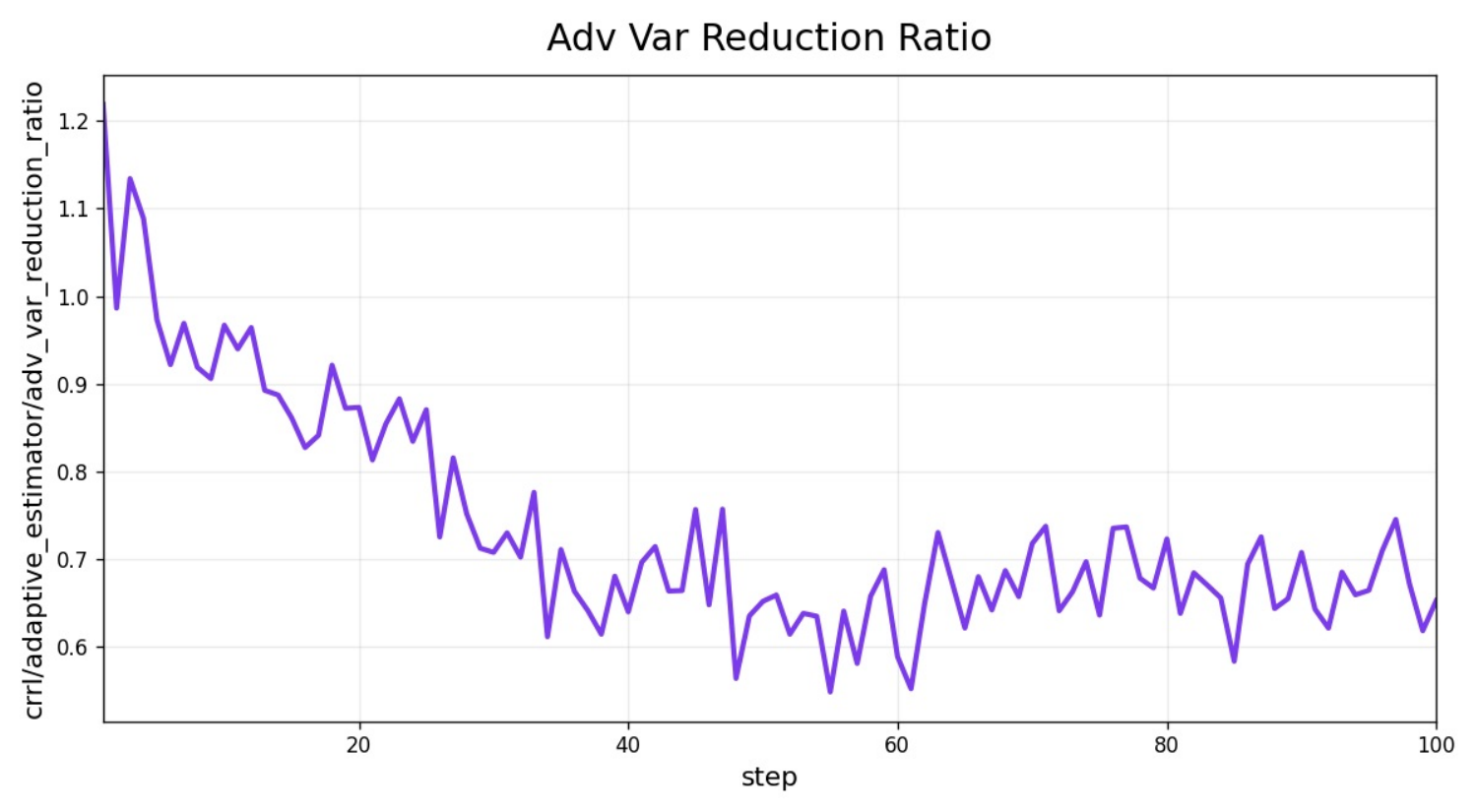}
        \caption{Advantage variance ratio}
    \end{subfigure}
    \caption{
    Training dynamics of POISE on DeepSeek-R1-Distill-Qwen-1.5B. Although the
    smaller model exhibits noisier dynamics, the reward improves steadily and
    the estimated value follows the increasing reward scale after the initial
    phase. The online target MAE remains bounded, and the advantage variance
    ratio stays below one for most of training, indicating effective variance
    reduction.
    }
    \label{fig:training-dynamics-ds15b}
\end{figure}

\clearpage
\section{Comparison to Policy-model Scale Critic Model}

\subsection{Critic Model Implementation}
\label{app:critic}

For the critic baseline in the value-prediction comparison, we implement a
sequence-level scalar critic following SPPO~\citep{wang2026spposequencelevelppolonghorizon}.  Unlike the
token-level PPO critic, which predicts a value $V(s_t)$ for every intermediate
token state and relies on GAE to propagate sparse terminal rewards, the SPPO
critic treats long-form reasoning as a sequence-level contextual bandit.  The
prompt $x$ is the context, the full response $y$ is the action, and the verifier
reward $R(x,y)$ is an outcome-level binary reward.  Accordingly, the critic
predicts a single prompt-level scalar value
\begin{equation}
    V_\phi(x) \approx \mathbb{E}_{y \sim \pi(\cdot \mid x)}
    [R(x,y)] ,
\end{equation}
which can be interpreted as the policy's estimated probability of solving the
prompt.

Architecturally, the critic is initialized from the corresponding policy
backbone and augmented with a scalar value head.  We feed the chat-formatted
prompt into the model, collect the hidden state at the final non-padding token,
and apply a linear head to produce a scalar logit:
\begin{equation}
    z_\phi(x) = w^\top h_\phi(x),
    \qquad
    V_\phi(x) = \sigma(z_\phi(x)).
\end{equation}

In the POISE comparison, the critic is trained only as an analysis baseline and
is never used to compute POISE advantages.  Given reward-labeled rollouts for a
prompt, we aggregate independently sampled verifier rewards into an empirical
prompt value, e.g. Avg@$K$:
\begin{equation}
    \hat V(x) = \frac{1}{K} \sum_{j=1}^{K} R(x,y^{(j)}).
\end{equation}
The critic is then optimized to predict this prompt-level target. Specifically, we use the
Bernoulli objective,
\begin{equation}
    \mathcal{L}_{\mathrm{BCE}}(\phi)
    =
    \operatorname{BCEWithLogitsLoss}(z_\phi(x), R).
\end{equation}
At evaluation time, we compare critic predictions against held-out empirical
prompt values using mean absolute error (MAE) and Pearson correlation.  
\FloatBarrier

\subsection{Comparison with an Online Policy-model Scale Critic}
\label{app:online_results}

\S~\ref{sec:Estimating_Baseline_Values} evaluated value prediction on a fixed policy distribution. We further test whether the internal state estimator remains competitive in the online setting, where the policy changes over the course of RL training. At every
10 training steps, we take the corresponding actor checkpoint, sample eight responses for each prompt used in the step, and use the empirical Avg@8 score as the target prompt value. We then compare two estimators against this target: a separately
trained policy-scale critic and our lightweight internal state estimator.

Figures~\ref{fig:online-qwen3-4b} and \ref{fig:online_performance_1p5b} show the result up to step 100 for Qwen3-4B and DeepSeek-R1-Distill-Qwen-1.5B. We
report both Pearson correlation and mean absolute error
(MAE) to the Avg@8 target.

For both Qwen3-4B and DeepSeek-R1-Distill-Qwen-1.5B, the internal state estimator closely tracks the critic throughout
training. 
While a separate critic can be slightly more accurate because it is an LLM-scale model trained on accumulated rollout data, our estimator remains close to the critic across both model scales while using only hidden-state and entropy features already produced by the policy forward pass. This supports the central motivation of POISE: internal states provide a practical value signal that tracks
the evolving policy without maintaining an additional critic model.

\begin{figure}[t]
    \centering
    \begin{minipage}{0.45\linewidth}
        \centering
        \includegraphics[width=\linewidth]{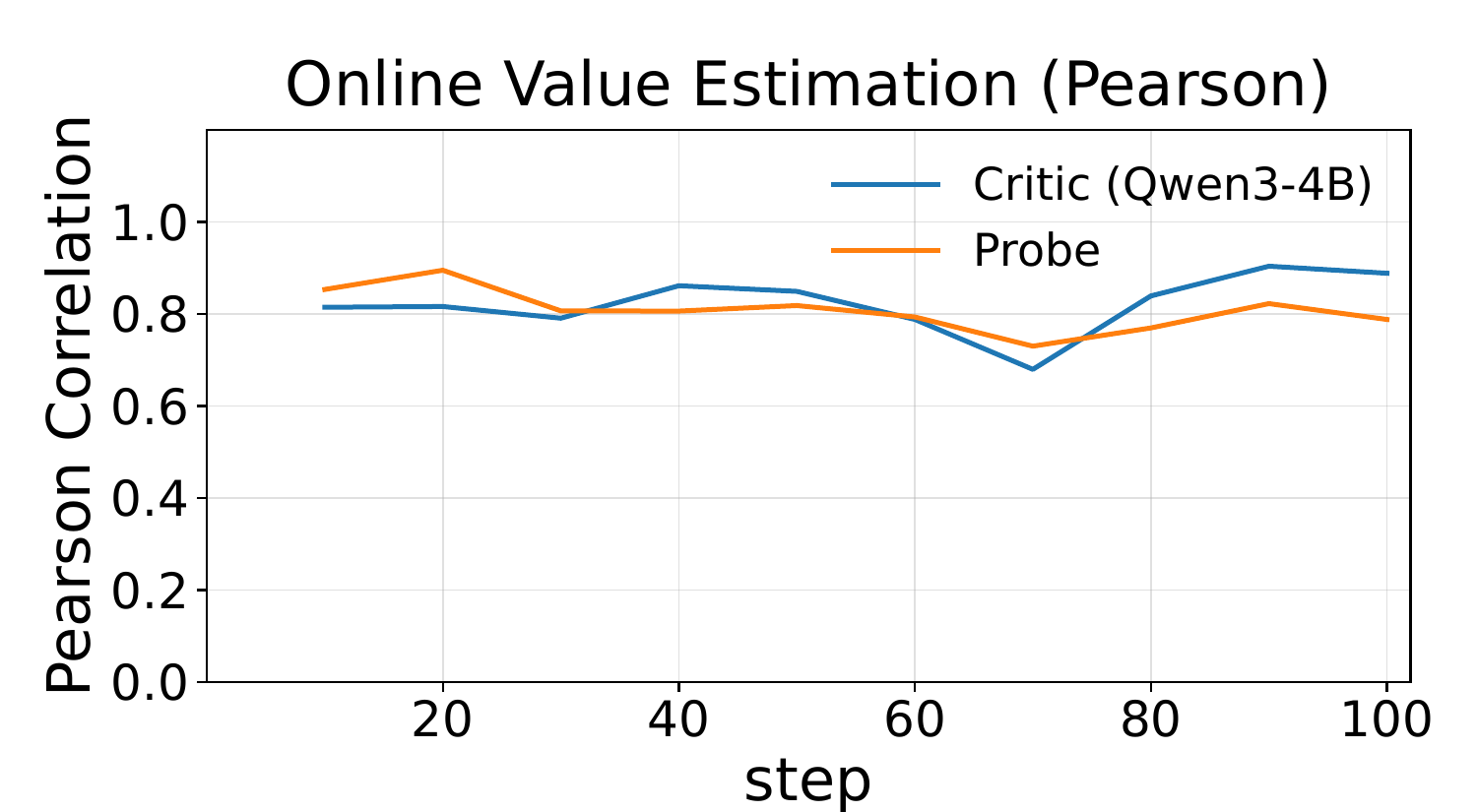}
     \end{minipage}
    \hfill
    \begin{minipage}{0.45\linewidth}
        \centering
        \includegraphics[width=\linewidth]{figures/mae_critic_vs_probe_4b.pdf}
           
    \end{minipage}
\caption{
    Online value prediction for Qwen3-4B.
    The target at each checkpoint is the empirical Avg@8 score from the
    corresponding actor checkpoint.
    }
    \label{fig:online-qwen3-4b}
\end{figure}
\begin{figure}[t]
    \centering
    \begin{subfigure}[t]{0.45\linewidth}
        \centering
        \includegraphics[width=\linewidth]{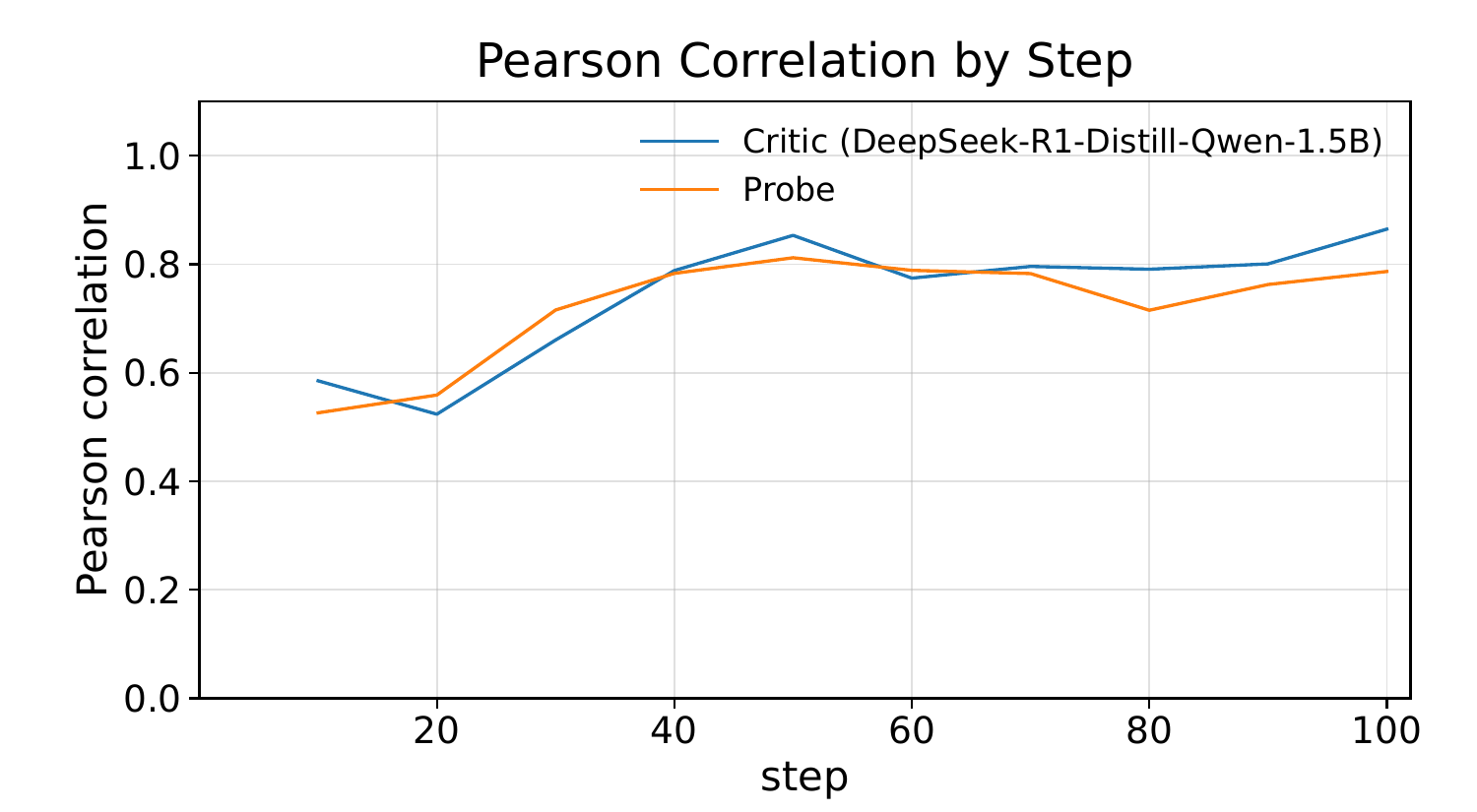}
        
    \end{subfigure}
    \hfill
    \begin{subfigure}[t]{0.45\linewidth}
        \centering
        \includegraphics[width=\linewidth]{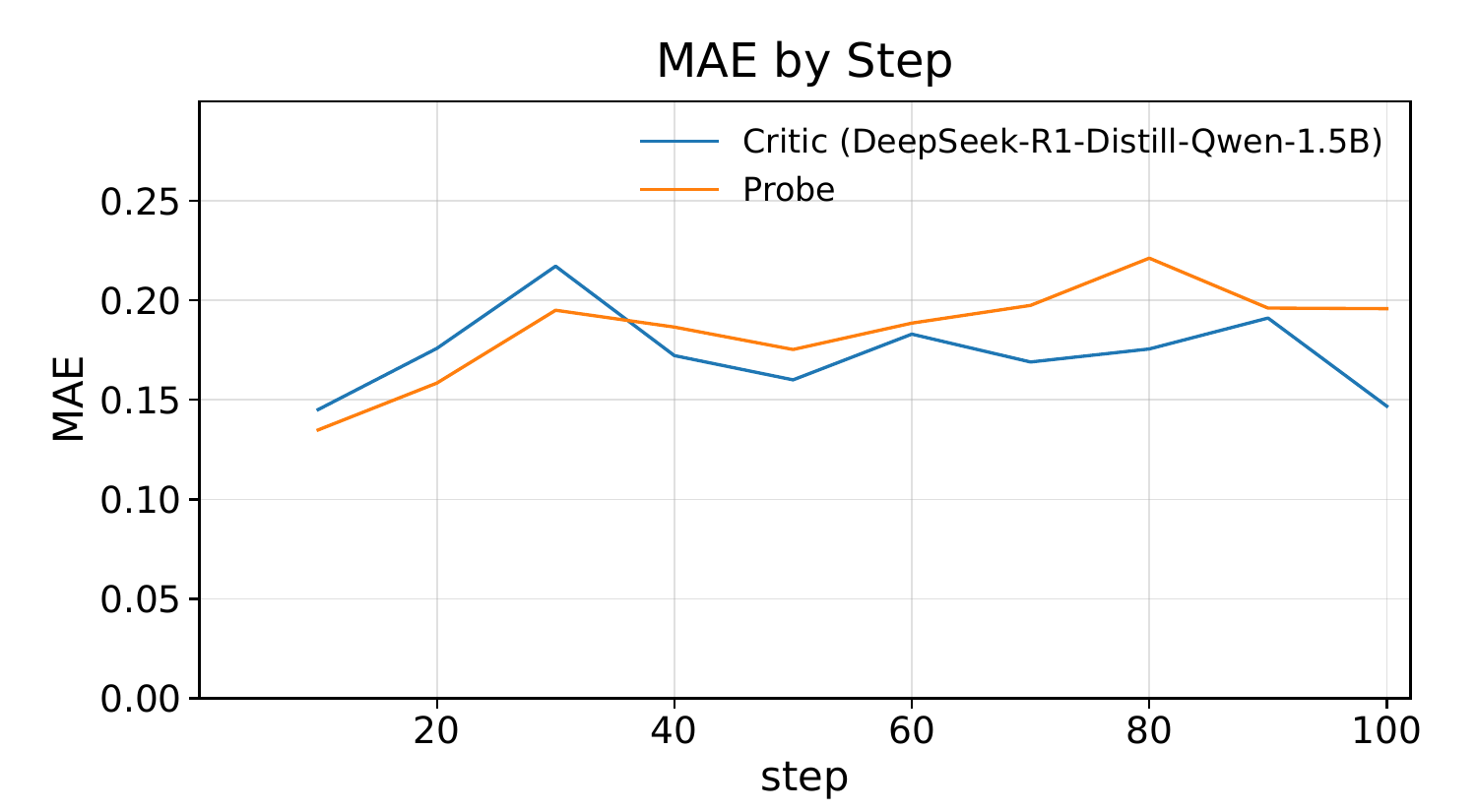}
        
    \end{subfigure}
    \caption{
    Online value prediction for DeepSeek-R1-Distill-Qwen-1.5B.
    The target at each checkpoint is the empirical Avg@8 score from the
    corresponding actor checkpoint.
    }
    \label{fig:online_performance_1p5b}
\end{figure}

\subsection{Comparisons on Multiple Domains and Models}
\label{app:generalizability}

We evaluate whether our internal-state value estimator generalizes to different
verifiable-reward domains and policy backbones. We consider five datasets
spanning math reasoning, coding, tool use, and instruction following:
DAPO-Math~\citep{yu2025dapoopensourcellmreinforcement},
DeepScaleR~\citep{deepscaler2025}, AceCoder~\citep{acecoder},
ToolDial~\citep{shim2025tooldial}, and IF-RLVR~\citep{IF-RLVR}. We also
evaluate three policy backbones: Qwen3-4B, DeepSeek-R1-Distill-Qwen-1.5B, and
DeepSeek-R1-Distill-Qwen-7B.

For each model--dataset pair, we train the internal-state estimator and a
separately trained critic on the same reward-labeled rollout data. To make the
comparison consistent across settings, we use 4,096 training examples for each
estimator, matching the number of examples stored in the value-estimator buffer
used by the POISE algorithm. We then evaluate both estimators against empirical
Avg@8 values on held-out prompts. For evaluation, we use 1,024 held-out prompts
for all datasets except ToolDial, for which we use 13,492 held-out examples.

Table~\ref{tab:generalization-full} reports the results. On Qwen3-4B, the
internal state estimator consistently outperforms the critic across all five
domains, reducing MAE and improving Pearson correlation in every setting. The
gains are especially large outside the original math setting: on AceCoder,
Pearson correlation increases from 0.056 to 0.612, and on IF-RLVR it
increases from 0.150 to 0.642. These results indicate that the estimator is
not merely exploiting dataset-specific artifacts from DAPO-Math; it can recover
value-relevant information from policy activations across substantially
different forms of verifiable feedback.

The trend is more mixed but still favorable on DeepSeek-R1-Distill-Qwen-1.5B.
The internal state estimator improves over the critic on DeepScaleR and
ToolDial, while the critic is stronger on AceCoder and slightly better in MAE on
IF-RLVR. Even in these weaker cases, the estimator remains competitive in
correlation despite using only lightweight hidden-state and entropy features.
For the completed DeepSeek-R1-Distill-Qwen-7B settings, the estimator again
improves over the critic on both DAPO-Math and ToolDial, suggesting that the
signal persists at a larger model scale.

Overall, these results support the broader applicability of POISE beyond the
mathematical-reasoning training experiments.

\begin{table*}[t]
\centering
\scriptsize
\caption{
Full value-prediction results across policy backbones and verifiable-reward domains.
We compare our internal state estimator against a separately trained critic and report MAE and Pearson correlation \(r\).
}
\label{tab:generalization-full}
\setlength{\tabcolsep}{2.5pt}
\renewcommand{\arraystretch}{1.05}
\resizebox{\textwidth}{!}{%
\begin{tabular}{lllcccc}
\toprule
\multirow{2}{*}{Policy model}
& \multirow{2}{*}{Domain}
& \multirow{2}{*}{Dataset}
& \multicolumn{2}{c}{Critic}
& \multicolumn{2}{c}{internal state estimator} \\
\cmidrule(lr){4-5} \cmidrule(lr){6-7}
& & 
& MAE $\downarrow$ & $r$ $\uparrow$
& MAE $\downarrow$ & $r$ $\uparrow$ \\
\midrule
\multirow{5}{*}{Qwen3-4B}
& \multirow{2}{*}{Math reasoning}
& DAPO-Math 
& 0.262 & 0.676 
& \textbf{0.141} & \textbf{0.870} \\

& 
& DeepScaleR 
& 0.393 & 0.384
& \textbf{0.231} & \textbf{0.609} \\

& Coding 
& AceCoder 
& 0.499 & 0.056
& \textbf{0.234} & \textbf{0.612} \\

& Tool calling 
& ToolDial 
& 0.303 & 0.440
& \textbf{0.188} & \textbf{0.840} \\

& Instruction following 
& IF-RLVR 
& 0.350 & 0.150
& \textbf{0.195} & \textbf{0.642} \\
\midrule
\multirow{5}{*}{\makecell[l]{DeepSeek-R1-Distill-\\Qwen-1.5B}}
& \multirow{2}{*}{Math reasoning}
& DAPO-Math 
& 0.127 & 0.723 
& \textbf{0.123} & \textbf{0.834} \\

& 
& DeepScaleR 
& 0.251 & 0.586
& \textbf{0.151} & \textbf{0.829} \\

& Coding 
& AceCoder 
& \textbf{0.135} & \textbf{0.706}
& 0.196 & 0.580 \\

& Tool calling 
& ToolDial 
& 0.201 & 0.337
& \textbf{0.122} & \textbf{0.672} \\

& Instruction following 
& IF-RLVR 
& \textbf{0.101} & 0.545
& 0.171 & \textbf{0.557} \\
\midrule
\multirow{5}{*}{\makecell[l]{DeepSeek-R1-Distill-\\Qwen-7B}}
& \multirow{2}{*}{Math reasoning}
& DAPO-Math 
& 0.300 & 0.441
& \textbf{0.191} & \textbf{0.784} \\

& 
& DeepScaleR 
& 0.270 & 0.457
& \textbf{0.164}& \textbf{0.814} \\

& Coding 
& AceCoder 
& 0.229 & 0.691
& \textbf{0.173} & \textbf{0.804} \\

& Tool calling 
& ToolDial 
& 0.194 & 0.663
& \textbf{0.153} & \textbf{0.842} \\

& Instruction following 
& IF-RLVR 
& \textbf{0.155} & \textbf{0.716}
& 0.191 & 0.596 \\
\bottomrule
\end{tabular}%
}
\end{table*}

\clearpage
\section{Ablations}

\subsection{Ablations of Hyperparameters During Hidden State Extraction}
\label{app:ablations_hidden_state}

We next ablate the hidden-state extraction hyperparameters of the Qwen3-4B estimator. Unless otherwise specified, we keep the estimator architecture fixed as StandardScaler followed by ridge regression, with prompt hidden states projected to 32 dimensions and trajectory hidden states projected to 256 dimensions using PCA. The scalar entropy features are kept fixed across all runs.

\paragraph{Layer index.}
We first fix the pooling window to the last 10 tokens and sweep the transformer
layer used to extract both prompt and trajectory hidden states. For Qwen3-4B,
Table~\ref{tab:qwen3-4b-layer-ablation} shows that performance is already strong
in early layers but improves toward the middle of the network. The best Pearson
correlation is obtained at layer 19, while layer 33 gives the lowest MAE. We use
layer 19 as the default for the main experiments because it gives the strongest
correlation with verifier value while remaining near-optimal in MAE.

\begin{table}[H]
\centering
\small
\caption{Layer ablation for Qwen3-4B with last-10-token mean pooling.}
\label{tab:qwen3-4b-layer-ablation}
\setlength{\tabcolsep}{5pt}
\renewcommand{\arraystretch}{1.05}
\begin{tabular}{ccc}
\toprule
Layer & MAE $\downarrow$ & Pearson $r$ $\uparrow$ \\
\midrule
1 & 0.128 & 0.807 \\
3 & 0.126 & 0.821 \\
5 & 0.129 & 0.816 \\
7 & 0.129 & 0.820 \\
9 & 0.131 & 0.814 \\
11 & 0.131 & 0.819 \\
13 & 0.130 & 0.823 \\
15 & 0.128 & 0.828 \\
17 & 0.124 & 0.831 \\
19 & \textbf{0.123} & \textbf{0.834} \\
21 & 0.128 & 0.830 \\
23 & 0.130 & 0.824 \\
25 & 0.132 & 0.817 \\
27 & 0.125 & 0.827 \\
29 & 0.125 & 0.830 \\
31 & 0.126 & 0.824 \\
33 & 0.120 & 0.831 \\
35 & 0.123 & 0.827 \\
\bottomrule
\end{tabular}
\end{table}

For DeepSeek-R1-Distill-Qwen-1.5B, we repeat the same odd-layer sweep using
prompt hidden states and reasoning-trajectory hidden states, both mean-pooled
over the last 10 tokens. As shown in
Table~\ref{tab:deepseek-r1-distill-qwen-1p5b-layer-ablation}, the strongest
validation performance appears in the earliest layer. However, the differences
across layers are small in absolute MAE, and layer 19 remains close to the best
setting while using the same extraction configuration as Qwen3-4B. We therefore
use layer 19 for both backbones in the main POISE experiments to avoid
model-specific layer tuning and keep the estimator implementation consistent
across models.

\begin{table}[htbp]
\centering
\small
\caption{Layer ablation for DeepSeek-R1-Distill-Qwen-1.5B with last-10-token mean pooling.}
\label{tab:deepseek-r1-distill-qwen-1p5b-layer-ablation}
\setlength{\tabcolsep}{5pt}
\renewcommand{\arraystretch}{1.05}
\begin{tabular}{ccc}
\toprule
Layer & MAE $\downarrow$ & Pearson $r$ $\uparrow$ \\
\midrule
1 & \textbf{0.134} & \textbf{0.425} \\
3 & 0.135 & 0.405 \\
5 & 0.135 & 0.418 \\
7 & 0.136 & 0.403 \\
9 & 0.137 & 0.369 \\
11 & 0.141 & 0.332 \\
13 & 0.139 & 0.339 \\
15 & 0.137 & 0.380 \\
17 & 0.136 & 0.382 \\
19 & 0.135 & 0.395 \\
21 & 0.135 & 0.378 \\
23 & 0.135 & 0.332 \\
25 & 0.139 & 0.290 \\
27 & 0.139 & 0.292 \\
\bottomrule
\end{tabular}
\end{table}

\paragraph{Pooling window.}
We also vary the number of final tokens used for mean pooling while fixing the layer to 19. As shown in Table~\ref{tab:qwen3-4b-pooling-ablation}, last-10 and last-15 pooling perform almost identically, while last-5 pooling is weaker. We choose last-10 pooling as the default because it achieves the best Pearson correlation while using a shorter extraction window than last-15.

\begin{table}[H]
\centering
\small
\caption{Pooling-window ablation for Qwen3-4B at layer 19.}
\label{tab:qwen3-4b-pooling-ablation}
\setlength{\tabcolsep}{5pt}
\renewcommand{\arraystretch}{1.08}
\begin{tabular}{lcc}
\toprule
Pooling window & MAE $\downarrow$ & Pearson $r$ $\uparrow$ \\
\midrule
Last 5 & 0.127 & 0.822 \\
Last 10 & \textbf{0.124} & \textbf{0.834} \\
Last 15 & 0.124 & 0.833 \\
\bottomrule
\end{tabular}
\end{table}

For DeepSeek-R1-Distill-Qwen-1.5B, we fix the layer to 1, which was selected by the layer sweep, and repeat the pooling-window ablation in Table~\ref{tab:deepseek-r1-distill-qwen-1p5b-pooling-ablation}. The three pooling windows are close, but last-10 pooling gives the best MAE and Pearson correlation.

\begin{table}[htbp]
\centering
\small
\caption{Pooling-window ablation for DeepSeek-R1-Distill-Qwen-1.5B at layer 1.}
\label{tab:deepseek-r1-distill-qwen-1p5b-pooling-ablation}
\setlength{\tabcolsep}{5pt}
\renewcommand{\arraystretch}{1.08}
\begin{tabular}{lcc}
\toprule
Pooling window & MAE $\downarrow$ & Pearson $r$ $\uparrow$ \\
\midrule
Last 5 & 0.1342 & 0.4230 \\
Last 10 & \textbf{0.1339} & \textbf{0.4247} \\
Last 15 & 0.1342 & 0.4200 \\
\bottomrule
\end{tabular}
\end{table}

Overall, these ablations support the robustness of the hidden-state extraction
choice used in the main POISE experiments. The estimator is not overly sensitive
to the exact pooling window, and while the best layer can be model-dependent, a
shared layer-19 configuration provides competitive performance for both
backbones. We therefore use layer 19 with last-10-token mean pooling as the
default extraction setting in the main experiments.

\subsection{Ablations on Probe Designs}
\label{app:ablations_probe_design}

We ablate the architecture of the lightweight value estimator while keeping the input representation fixed. All models use the Qwen3-4B features from layer 19 with last-10-token mean pooling, including prompt hidden states, trajectory hidden states, and entropy statistics. The train/test split are identical to those used in \S~\ref{app:ablations_hidden_state}. We compare the default linear ridge regressor against multi-layer perceptrons (MLPs) with different depths and widths. MLPs are trained with AdamW, ReLU activations, dropout, and early stopping on a held-out validation split from the training set.

Table~\ref{tab:qwen3-4b-probe-architecture-ablation} shows that increasing model capacity can slightly reduce MAE: the best MLP, a 3-layer network with width 1024, improves MAE from 0.124 to 0.117. However, this improvement does not translate into better rank alignment with the target values. The linear ridge probe achieves the highest Pearson correlation, 0.834, while all MLP variants obtain lower correlation. We therefore use the linear estimator in the main method: it is cheaper to fit online, has fewer hyperparameters, and provides the most reliable correlation with verifier value, which is important for forming stable advantages.

\begin{table}[H]
\centering
\small
\caption{Probe architecture ablation for Qwen3-4B at layer 19 with last-10-token mean pooling. MLP names indicate depth $\times$ hidden width. Lower MAE and higher Pearson correlation are better.}
\label{tab:qwen3-4b-probe-architecture-ablation}
\setlength{\tabcolsep}{5pt}
\renewcommand{\arraystretch}{1.08}
\begin{tabular}{lcc}
\toprule
Probe architecture & MAE $\downarrow$ & Pearson $r$ $\uparrow$ \\
\midrule
Linear ridge & 0.124 & \textbf{0.834} \\
MLP 1$\times$512 & 0.154 & 0.778 \\
MLP 3$\times$512 & 0.123 & 0.780 \\
MLP 5$\times$512 & 0.128 & 0.765 \\
MLP 7$\times$512 & 0.124 & 0.763 \\
MLP 9$\times$512 & 0.133 & 0.814 \\
MLP 3$\times$1024 & \textbf{0.117} & 0.801 \\
MLP 5$\times$1024 & 0.128 & 0.754 \\
\bottomrule
\end{tabular}
\end{table}

Overall, this result suggests that the value-relevant signal in the policy's internal states is largely linearly accessible. Larger nonlinear probes can improve absolute calibration in some cases, but they are less stable in correlation and introduce additional online-training cost. This supports our choice of a linear probe as the default estimator for POISE.


\end{document}